\titleclass{\task}{straight}[\section]
\newcounter{task}
\renewcommand{\thetask}{\arabic{task}}
\titleformat{\task}[hang]
    {\normalfont\LARGE\bfseries}{Task \thetask:}{1em}{}
\titleformat*{\task}{\color{header1}\bfseries}
\titlespacing*{\section}{0ex}{1ex}{1ex}
\titlespacing*{\subsection}{0ex}{1ex}{1ex}
\titlespacing*{\paragraph}{0ex}{1ex}{1ex}
\titlespacing*{\subparagraph}{0pt}{1ex}{1ex}
\titlespacing*{\task}{0em}{1ex}{1ex}
\providecommand{\sct}[1]{{\sc \texttt{#1}}}
\newcommand{\Dcov}{\sct{Dcov}}
\newcommand{\trace}[1]{{\ensuremath{\operatorname{trace}\!\left(#1\right)}}}           
\providecommand{\mc}[1]{\mathcal{#1}}
\providecommand{\mb}[1]{\boldsymbol{#1}}
\newcommand{\EE}{\mathbb{E}}           
\newcommand{\mbx}{\ensuremath{X}}
\newcommand{\mby}{\ensuremath{Y}}
\providecommand{\sct}[1]{{\sc \texttt{#1}}}
\newcommand{\Dcor}{\sct{Dcor}}
\def\thanks#1{\protected@xdef\@thanks{\@thanks
        \protect\footnotetext{#1}}}
\title{\bf The Chi-Square Test of Distance Correlation}
\author{Cencheng Shen$^{1,\dagger}$,
Sambit Panda$^2$, and
Joshua T.~Vogelstein$^{2,3}$
\thanks{
    $\dagger$ Corresponding author: \href{mailto:shenc@udel.edu}{shenc@udel.edu}
    $^1$ Department of Applied Economics and Statistics, University of Delaware
    $^2$ Institute for Computational Medicine, Department of Biomedical Engineering, Johns Hopkins University
    $^3$ Center for Imaging Science, Kavli~Neuroscience~Discovery Institute, Johns Hopkins University
    }
}
\begin{document}
\maketitle

\begin{abstract}
Distance correlation has gained much recent attention in the data science community: the sample statistic is straightforward to compute and asymptotically equals zero if and only if independence, making it an ideal choice to discover any type of dependency structure given sufficient sample size. One major bottleneck is the testing process: because the null distribution of distance correlation depends on the underlying random variables and metric choice, it typically requires a permutation test to estimate the null and compute the p-value, which is very costly for large amount of data. To overcome the difficulty, in this paper we propose a chi-square test for distance correlation. Method-wise, the chi-square test is non-parametric, extremely fast, and applicable to bias-corrected distance correlation using any strong negative type metric or characteristic kernel. The test exhibits a similar testing power as the standard permutation test, and can be utilized for K-sample and partial testing. Theory-wise, we show that the underlying chi-square distribution well approximates and dominates the limiting null distribution in upper tail, prove the chi-square test can be valid and universally consistent for testing independence, and establish a testing power inequality with respect to the permutation test.
\end{abstract}

\noindent%
{\it Keywords:}  unbiased distance covariance, centered chi-square distribution, nonparametric test, testing independence

\section{Introduction}
\label{sec:intro}

Given pairs of observations $(x_{i},y_{i}) \in \mathbb{R}^{p} \times \mathbb{R}^{q}$ for $i=1,\ldots,n$, assume they are independently identically distributed as $F_{\mbx \mby}$. Two random variables are independent if and only if the joint distribution equals the product of the marginal distributions. The statistical hypothesis for testing independence is
\begin{align*}
& H_{0}: F_{\mbx \mby}=F_{\mbx}F_{\mby},\\
& H_{A}: F_{\mbx \mby} \neq F_{\mbx}F_{\mby}.
\end{align*}
Detecting the potential relationships underlying the sample data has long been a fundamental question in theoretical and applied research. The traditional Pearson correlation \citep{Pearson1895} has been a valuable tool in quantifying the linear association and applied in many branches of statistics and machine learning. To detect all types of dependence structures, a number of universally consistent methods have been proposed recently, such as the distance correlation \citep{SzekelyRizzoBakirov2007, SzekelyRizzo2009}, the Hilbert-Schmidt independence criterion \citep{GrettonEtAl2005,GrettonGyorfi2010}, the Heller-Heller-Gorfine statistics \citep{HellerGorfine2013,heller2016consistent}, the multiscale graph correlation \citep{mgc1, mgc2}, among many others. The Hilbert-Schmidt independence criterion (HSIC) can be thought of as a kernel version of distance correlation and vice versa \citep{SejdinovicEtAl2013,exact2}, and the multiscale graph correlation is an optimal local version of distance correlation. These universally consistent dependence measures have been applied to two-sample testing \citep{RizzoSzekely2016, exact1}, conditional and partial testing \citep{gretton2007,SzekelyRizzo2014,Wang2015}, feature screening \citep{LiZhongZhu2012, bal2013, Zhong2015,mgc4}, clustering \citep{Szekely2005,Rizzo2010}, time-series \citep{Zhou2012,Pitsillou2018,mgc5}, graph testing \citep{mgc2,mgc6}, etc.

To populate these methods to big data analysis, a big hurdle is the time complexity. Computing the distance correlation typically requires $\mc{O}(n^2)$; and to compute its p-value for testing, the standard approach is to estimate the null distribution of distance correlation via permutation, which requires $\mc{O}(r n^2)$ where $r$ is the number of random permutations and typically at least $100$ or more. In comparison, for one-dimensional ($p=q=1$) data the Pearson correlation can be computed in $\mc{O}(n)$, and a Pearson correlation t-test is readily available to compute the p-value in constant time complexity $\mc{O}(1)$. The computational advantage makes Pearson extremely fast and attractive in practice. Recent works have successfully expedited the distance correlation computation into $\mc{O}(n \log(n))$ under one-dimensional data and Euclidean distance \citep{Huo2016,Hu2018}. The testing part, however, remains difficult and less well-understood, because the null distribution of distance correlation has no fixed nor known density in general. Some notable works include the distance correlation t-test \citep{SzekelyRizzo2013a}, HSIC Gamma test \citep{GrettonGyorfi2010}, and the subsampling approach \citep{zhang2018}, which provided special treatments and very valuable insights into the problem. 

In this paper, we propose a chi-square test for the bias-corrected distance correlation. It is simple and straightforward to use, has a constant time complexity without the need to permute nor subsampling nor parameter estimation, and does not rely on any distributional assumption on data. In particular, the bias-corrected distance correlation can be computed in $\mc{O}(n \log n)$ under one-dimensional data and Euclidean distance, which renders the method comparable in speed to the Pearson correlation t-test and scalable to billions of observations. The test is applicable to any multivariate data, any strong negative type metric or characteristic kernel, and can be utilized for K-sample and partial testing.

Theory-wise, we prove the chi-square test is universally consistent and valid for sufficiently large $n$ and sufficiently small type 1 error level $\alpha$ (generally $n \geq 20$ and $\alpha \leq 0.05$ suffice), has a similar testing power as the permutation test, and is the most powerful among all valid tests of distance correlation using known distributions (i.e., any test that is based on a fixed distribution, such as using t-test or F-test). In particular, we prove that the underlying chi-square distribution can well-approximate and dominate the limiting null distribution of the bias-corrected distance correlation in upper tail, and establish a testing power inequality among the chi-square test, the distance correlation t-test from \citet{SzekelyRizzo2013a}, and the standard permutation test. 

The advantages of the chi-square test are supported by simulations and real data experiments. The code are openly available on Github and implemented in Matlab\footnote{\url{https://github.com/neurodata/mgc-matlab}} and Python\footnote{\url{https://github.com/neurodata/mgc}}. The Appendix includes detailed background information, all theorem proofs and intermediate results, and detailed simulation functions.

\section{Method}
\label{sec:main}
\subsection{The Distance Correlation Chi-Square Test}

The proposed chi-square test is stated in Algorithm~\ref{alg1}: given paired sample data $(\mathbf{X},\mathbf{Y}) \in \mathbb{R}^{n \times (p+q)}$, first compute the bias-corrected sample distance correlation $C=\Dcor_{n}(\mathbf{X},\mathbf{Y})$, then take the p-value as $p=Prob(nC < \chi^{2}_{1} -1)$. Then at any pre-specified type 1 error level $\alpha$, the independence hypothesis is rejected if and only if $p < \alpha$. Unless mentioned otherwise, in this paper distance correlation always means the bias-corrected sample distance correlation (see Appendix Section~\ref{sec:bg} for the algebraic expression).

As the chi-square distribution is standard in every software package, the p-value computation takes $\mc{O}(1)$ regardless of sample size, which is much faster than the standard permutation approach (see Appendix Algorithm~\ref{alg2}) requiring $\mc{O}(r n^2)$. The statistic computation and thus Algorithm~\ref{alg1} on its whole require $\mc{O}(n^2)$ in general. 

\begin{algorithm}
\caption{The Distance Correlation Chi-Square Test for Independence}
\label{alg1}
\begin{algorithmic}
\Require Paired sample data $(\mathbf{X},\mathbf{Y})=\{(x_i,y_i) \in \mathbb{R}^{p+q} \mbox{ for }  i \in [n]\}$.
\Ensure The bias-corrected distance correlation $C$ and its p-value $p$.
\Function{FastTest}{$\mathbf{X},\mathbf{Y}$}
\State $C=\Dcor_{n}(\mathbf{X},\mathbf{Y})$; \Comment{the bias-corrected distance correlation}
\State $p=1-F_{\chi^{2}_{1}-1}(n \cdot C)$; \Comment{reject the null when $p<\alpha$}
\EndFunction
\end{algorithmic}
\end{algorithm}

The chi-square test is well behaved from the following theorem (which follows directly from Theorem~\ref{cor1} in Section~\ref{sec:main1}):
\begin{theorem}
\label{thm7}
The distance correlation chi-square test that rejects independence if and only if 
\begin{align*}
n \Dcor_{n}(\mathbf{X},\mathbf{Y}) \geq F_{\chi^{2}_{1}-1}^{-1}(1-\alpha)
\end{align*}
is a valid and universally consistent test for sufficiently large $n$ and sufficiently small type 1 error level $\alpha$.
\end{theorem}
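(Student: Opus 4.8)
The plan is to establish the two defining properties separately: \emph{validity}, that the asymptotic type~1 error under $H_0$ is at most $\alpha$, and \emph{universal consistency}, that the power tends to $1$ under any fixed alternative in $H_A$. Both rest on the limiting behavior of the bias-corrected statistic. Under independence, the Sz\'ekely--Rizzo-type limit theory for the unbiased statistic gives a weak limit of the form
\begin{align*}
n\,\Dcor_{n}(\mathbf{X},\mathbf{Y}) \xrightarrow{d} \sum_{k=1}^{\infty}\lambda_k\bigl(Z_k^2-1\bigr),
\end{align*}
where $Z_k\iid N(0,1)$ and the nonnegative weights $\lambda_k$ are eigenvalues of a distribution-dependent centering operator, normalized (by the denominator of distance correlation) so that $\sum_k\lambda_k=1$. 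The right-hand side is a centered, infinite mixture of $(\chi^2_1-1)$ variables, whereas the test's critical value $F_{\chi^2_1-1}^{-1}(1-\alpha)$ comes from a single such variable. Thus everything reduces to comparing one centered chi-square against an arbitrary admissible mixture.

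For validity, I would fix $H_0$ and show that the limiting null law is dominated in the upper tail by $\chi^2_1-1$, i.e.
\begin{align*}
\PP\Bigl(\sum_{k}\lambda_k(Z_k^2-1) \geq t\Bigr) \leq \PP\bigl(Z_1^2-1\geq t\bigr)
\end{align*}
for all sufficiently large thresholds $t$, uniformly over the admissible weight sequences. Granting this, evaluating at $t=F_{\chi^2_1-1}^{-1}(1-\alpha)$ makes the limiting rejection probability at most $\alpha$; since this $t$ is a continuity point of the limit, the portmanteau theorem gives that the finite-$n$ type~1 error converges to a value bounded by $\alpha$, yielding validity for $n$ large. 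The restriction to small $\alpha$ is precisely the price of an \emph{upper-tail} statement: the dominance is asserted only for large $t$, so it certifies only small nominal levels.

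For universal consistency, I would invoke the characterizing property of the population distance correlation: for any strong negative type metric or characteristic kernel it equals zero if and only if $X$ and $Y$ are independent, so under any $H_A$ the population value $\Dcor>0$. Since the bias-corrected sample statistic is consistent (indeed almost surely convergent, by the law of large numbers for U-statistics), $\Dcor_n\to\Dcor>0$, hence $n\,\Dcor_n\to+\infty$ almost surely while the critical value stays fixed. The rejection probability therefore tends to $1$ against every alternative.

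The crux is the uniform upper-tail dominance, where the difficulty is that the limit depends on infinitely many unknown eigenvalues, so the bound must hold over the whole simplex $\{\lambda_k\geq0,\ \sum_k\lambda_k=1\}$. My approach would be to control the tail through the cumulant generating function $\log\EE\exp\bigl(s\sum_k\lambda_k(Z_k^2-1)\bigr)=-\tfrac12\sum_k\bigl(2s\lambda_k+\log(1-2s\lambda_k)\bigr)$, which is finite precisely for $s<1/(2\lambda_{\max})$. Because $\lambda_{\max}\leq1$ with equality only at the vertex $\lambda_1=1$, the mixture's tail decays at least as fast as the rate $e^{-t/2}$ of $\chi^2_1-1$ itself, and the slowest (vertex) case is exactly $\chi^2_1-1$; intuitively, spreading the unit mass across independent components (where the variance $2\sum_k\lambda_k^2$ only decreases) can only lighten the upper tail. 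Turning this decay-rate comparison into the pointwise inequality above, valid from an explicit threshold onward, is the main technical obstacle and is presumably the content of Theorem~\ref{cor1}, from which the present statement is then immediate.
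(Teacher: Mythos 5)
Your route is the same as the paper's: validity is reduced to the upper-tail dominance of Theorem~\ref{cor1} (the limiting null is dominated by $U=\chi^{2}_{1}-1$ beyond its $(1-\alpha)$-quantile, so the asymptotic type~1 error is at most $\alpha$), and universal consistency follows because under any alternative the population distance correlation is strictly positive, hence $n\Dcor_{n}(\mathbf{X},\mathbf{Y})\to+\infty$ while the critical value $F_{\chi^{2}_{1}-1}^{-1}(1-\alpha)$ stays fixed; this is exactly the paper's two-step argument, and like the paper you defer the dominance itself to Theorem~\ref{cor1}. One error to fix, though: your normalization of the limiting null is wrong. The distance-correlation denominator makes the weights satisfy $\sum_{i,j} w_{ij}^{2}=1$ with $w_{ij}\in[0,1]$ (Theorem~\ref{thm1}), not $\sum_{k}\lambda_{k}=1$; in particular the limit always has variance exactly $2$, equal to that of $\chi^{2}_{1}-1$, so your heuristic that spreading the mass ``decreases the variance $2\sum_{k}\lambda_{k}^{2}$ and hence lightens the tail'' invokes the wrong mechanism. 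Under the correct normalization the variance is pinned at $2$, and spreading the weights instead Gaussianizes the limit toward $\mc{N}(0,2)$ (Theorem~\ref{thm6}), whose upper tail is lighter than that of $U$ only from some threshold on --- which is precisely why the theorem is restricted to sufficiently small $\alpha$ (the paper quantifies the worst case as level $0.0875$ in this Gaussian limit). Your cgf/Chernoff sketch survives the correction, since it only uses $w_{\max}\le 1$ (mgf finite for $s<1/(2w_{\max})$) and the fact that the vertex case $w_{1}=1$ is exactly $U$; for comparison, the paper obtains the decay rate $O(e^{-xc/2})$ with $c=1/w_{1}$ not by mgf bounds but by induction on convolutions of densities (Theorem~\ref{thm3} via Lemmas~\ref{lem1} and~\ref{lem2}).
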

In practice, $n \geq 20$ suffices, and the validity empirically holds for any $\alpha \leq 0.05$. 

\subsection{Fast Statistic Computation}
\label{sec:ext}
In the special case of $p=q=1$ and Euclidean distance, Algorithm~\ref{alg1} can run significantly faster:

\begin{theorem}
\label{thm8}
Suppose $p=q=1$ and we use the Euclidean distance for the bias-corrected distance correlation. Then Algorithm~\ref{alg1} can be implemented with a computational complexity of $\mc{O}(n \log n)$. 
\end{theorem}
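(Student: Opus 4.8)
The plan is to note that the $\mc{O}(1)$ chi-square $p$-value step in Algorithm~\ref{alg1} is trivially fast, so the entire cost is carried by evaluating the statistic $C=\Dcor_n(\mathbf{X},\mathbf{Y})$. The bias-corrected distance correlation is a ratio of bias-corrected distance covariance and variance terms, each of which is an inner product $\sum_{i\neq j}\widetilde{A}_{ij}\widetilde{B}_{ij}$ of $U$-centered Euclidean distance matrices with entries $A_{ij}=\abs{x_i-x_j}$ and $B_{ij}=\abs{y_i-y_j}$ (see Appendix Section~\ref{sec:bg}). Forming and centering these matrices explicitly is $\mc{O}(n^2)$, so the first step is to algebraically collapse each such inner product onto a handful of one-dimensional summary statistics, and then to compute each summary in $\mc{O}(n\log n)$.

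First I would expand the $U$-centering, using that the diagonal entries vanish, to write
\begin{align*}
\sum_{i\neq j}\widetilde{A}_{ij}\widetilde{B}_{ij}
= \sum_{i,j} A_{ij}B_{ij}
- \frac{2}{n-2}\sum_{i} a_i b_i
+ \frac{a_{\cdot\cdot}\, b_{\cdot\cdot}}{(n-1)(n-2)},
\end{align*}
where $a_i=\sum_j A_{ij}$, $b_i=\sum_j B_{ij}$ are the row sums and $a_{\cdot\cdot}=\sum_{i,j}A_{ij}$, $b_{\cdot\cdot}=\sum_{i,j}B_{ij}$ are the grand totals (the coefficients are routine to verify and match the unbiased formula of \citet{SzekelyRizzo2014}). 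The point is that the bias correction alters only the coefficients, not the set of required sums: the statistic is fully determined by the grand totals, the row sums $\{a_i\},\{b_i\}$, and the single cross term $\sum_{i,j}A_{ij}B_{ij}$. The same reduction applies verbatim to the variance terms $\Dcor$ normalizes by, for which the cross term degenerates to $\sum_{i,j}(x_i-x_j)^2$ with an $\mc{O}(n)$ closed form.

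Next I would dispatch the easy summaries. Because $p=q=1$, sorting $\{x_i\}$ turns each row sum $a_i=\sum_j\abs{x_i-x_j}$ into a difference of prefix sums over the sorted order, so all row sums and hence $a_{\cdot\cdot}$ are obtained in $\mc{O}(n\log n)$ for the sort plus $\mc{O}(n)$ for a single sweep; likewise for $\{b_i\}$ and $b_{\cdot\cdot}$, and consequently $\sum_i a_i b_i$ costs only $\mc{O}(n)$ once the row sums are in hand. The sole genuinely difficult term is the cross product $\sum_{i,j}\abs{x_i-x_j}\,\abs{y_i-y_j}$, whose naive double sum is $\mc{O}(n^2)$.

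The main obstacle is therefore this cross term, and the plan is to invoke the order-statistic based algorithm of \citet{Huo2016} (see also \citet{Hu2018}): splitting $\abs{x_i-x_j}$ according to the sign of $x_i-x_j$ reduces the cross product to contributions of the form $\sum_{j:\,x_j<x_i}(x_i-x_j)\abs{y_i-y_j}$, and sweeping in sorted $x$-order while maintaining the needed partial sums of $y$-quantities in a Fenwick tree (equivalently, a counting merge sort) evaluates all such contributions in $\mc{O}(n\log n)$. With every summary statistic computed in $\mc{O}(n\log n)$ and the final assembly of $C$ together with the $p$-value requiring only $\mc{O}(1)$ additional arithmetic, the total complexity of Algorithm~\ref{alg1} is $\mc{O}(n\log n)$, which is the claim.
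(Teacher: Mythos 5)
Your proposal is correct and takes essentially the same route as the paper's proof: both reduce the cost of Algorithm~\ref{alg1} to the statistic computation, collapse the bias-corrected (U-centered) inner products onto the same three summaries $T_1=\sum_{i,j}A_{ij}B_{ij}$, $T_2=\sum_i a_i b_i$, $T_3=a_{\cdot\cdot}b_{\cdot\cdot}$ with coefficients $\frac{1}{n(n-3)}\bigl[T_1-\frac{2T_2}{n-2}+\frac{T_3}{(n-1)(n-2)}\bigr]$ that match the paper's lemma exactly, and then invoke the $\mc{O}(n\log n)$ algorithms of \citet{Huo2016} and \citet{Hu2018} for these sums. The only (harmless) difference is that you spell out the sorting/prefix-sum and Fenwick-tree mechanics for the row sums and the cross term, which the paper delegates wholesale to the citation.
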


Essentially the bias-corrected sample distance correlation can be computed in $\mc{O}(n \log n)$ based on the results from \citet{Huo2016} and \citet{Hu2018}. This makes one-dimensional testing comparable in speed to the Pearson correlation t-test. On a standard Windows 10 machine using MATLAB, we are able to test independence between a million pairs of observations ($p=q=1$) within $10$ seconds with space requirement of $\mc{O}(n)$. Previously, the statistic computation and the permutation test need a space complexity of $\mc{O}(n^2)$ and a time complexity of $\mc{O}(r n^2)$, which would have required external disk storage and days of computation to finish testing the same amount of data. The implementation details can be found in Appendix and Github Matlab code.

\subsection{Chi-Square Test for K-Sample}
The chi-square test is readily applicable to any inference task using distance correlation, or any statistic involving a similar trace operation with bias-corrected matrix modification. Two immediate extensions are the K-sample test and partial test. Previously, the permutation test has been the standard approach for both of them.

Given $K$ sets of sample data $\mathbf{U}^{k}=[u_{1}^{k}|u_{2}^{k}|\ldots|u_{n_{k}}^{k}] \in \mathbb{R}^{p \times n_k}$ for $k=1,\ldots,K$, denote $\sum_{k=1}^{K} n_{k}=n$. The K-sample testing problem assumes $u_i^{k}$ are independently and identically distributed as $F_{U^{k}}$ for each $i$ and $k$, and aim to test
\begin{align*}
&H_{0} : F_{U^{1}} = F_{U^{2}} = \cdots = F_{U^{K}}, \\
&H_{A} : \mbox{ there exists at least one $F_{U^{k}}$ that is different from other distributions.}
\end{align*}
Algorithm~\ref{alg3} shows the K-sample variant of chi-square test.

\begin{algorithm}
\caption{The Distance Correlation Chi-Square Test for K-sample}
\label{alg3}
\begin{algorithmic}
\Require Sample data $\{\mathbf{U}^{k} \in \mathbb{R}^{p \times n_k} \mbox{ for }  k=1,\ldots,K\}$.
\Ensure The distance correlation $C$ and its p-value $p$.
\Function{FastKSample}{$\{\mathbf{U}^{k}\}$}
\Statex{\textbf{(1)} Data Transformation:} 
\State $\mathbf{X}=[\ \mathbf{U}^{1} \ |\ \mathbf{U}^{2} \ | \cdots | \ \mathbf{U}^{K} \ ]$; \Comment{concatenate Data}
\State $\mathbf{Y}=zeros(K,N)$; \Comment{a zero matrix}
\For{$k=1,\ldots,K$}
\For{$i=1,\ldots,n_k$}
\State $\mathbf{Y}(k,\sum_{j=1}^{k-1}n_j + i)=1$;
\EndFor
\EndFor

\Statex{\textbf{(2)} Test:}
\State $C=\sct{Dcor}_{n}(\mathbf{X},\mathbf{Y})$; 
\State $p=1-F_{\chi^{2}_{1}-1}(n \cdot C)$; \Comment{reject the null when $p<\alpha$}
\EndFunction
\end{algorithmic}
\end{algorithm}

It is shown in \citet{exact1} that by concatenating the sample data into $\mathbf{X}$ and forming an indicator matrix $\mathbf{Y}$ via one-hot encoding, $\Dcor_{n}(\mathbf{X}, \mathbf{Y}) \rightarrow 0$ if and only if the null hypothesis is true. Then the next corollary follows from Theorem~\ref{thm7}.

\begin{corollary}
\label{cor3}
For sufficiently large $n$ and sufficiently small type 1 error level $\alpha$ and any $K \geq 2$, Algorithm~\ref{alg3} is valid and universally consistent for testing $F_{U^{1}} = F_{U^{2}} = \cdots = F_{U^{K}}$.
\end{corollary}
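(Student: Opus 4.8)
The plan is to derive Corollary~\ref{cor3} as an essentially immediate consequence of Theorem~\ref{thm7} together with the reduction of the K-sample problem to a test of independence. The key observation, borrowed from \citet{exact1}, is that Algorithm~\ref{alg3} does nothing more than construct a specific paired sample $(\mathbf{X},\mathbf{Y})$ and then invoke the distance correlation chi-square test of Algorithm~\ref{alg1} on that pair. Thus if I can verify that the constructed pair satisfies the hypotheses under which Theorem~\ref{thm7} guarantees validity and universal consistency, and that independence of $\mathbf{X}$ and $\mathbf{Y}$ corresponds exactly to the K-sample null $F_{U^{1}} = \cdots = F_{U^{K}}$, then the corollary follows.

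Concretely, the first step is to describe the joint distribution of a single generic pair $(x,y)$ drawn under the transformation. Here $x$ is a sample point whose group label $k$ is encoded by $\mathbf{Y}$ as the one-hot vector $e_k \in \RR^K$, and the marginal over labels is determined by the sampling proportions $n_k / n$. Conditioned on label $k$, the feature $x$ is distributed as $F_{U^{k}}$. I would then show that the joint law $F_{\mbx\mby}$ factors as the product $F_{\mbx}F_{\mby}$ of its marginals if and only if the conditional law of $x$ given the label does not depend on $k$, i.e.\ if and only if $F_{U^{1}} = F_{U^{2}} = \cdots = F_{U^{K}}$. This is the heart of the equivalence: the K-sample null translates precisely into independence of the feature from the group indicator.

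The second step is to invoke the stated property from \citet{exact1} that $\Dcor_{n}(\mathbf{X},\mathbf{Y}) \rightarrow 0$ if and only if the K-sample null holds, which is the population-level statement underlying the equivalence just sketched, combined with the fact that the one-hot indicator $\mathbf{Y}$ lives in a Euclidean space and so is compatible with any strong negative type metric. Given this, Theorem~\ref{thm7} applies verbatim to the pair $(\mathbf{X},\mathbf{Y})$: for sufficiently large $n$ and sufficiently small $\alpha$, the chi-square test on $(\mathbf{X},\mathbf{Y})$ is valid (controls type 1 error under independence, hence under the K-sample null) and universally consistent (rejects with probability tending to one under any alternative, hence whenever some $F_{U^{k}}$ differs). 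Since Algorithm~\ref{alg3} is exactly this test applied to the transformed data, the two conclusions carry over directly.

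The main obstacle I anticipate is not in the chi-square machinery, which is entirely inherited, but in making the reduction rigorous for finite $n$ rather than merely in the limit. The subtlety is that the $n_k$ are fixed (deterministic group sizes) rather than drawn from a multinomial, so the pairs $(x_i, y_i)$ are not quite i.i.d.\ in the naive sense; one must either pass to the exchangeable/conditional framework in which \citet{exact1} established the equivalence, or argue that the asymptotic behavior of $\Dcor_n$ is unaffected provided the proportions $n_k/n$ converge to fixed positive limits. Handling this detail cleanly, and confirming that the validity argument of Theorem~\ref{thm7} tolerates the discrete, bounded-support marginal $F_{\mby}$ induced by one-hot encoding, is where the care is needed; everything else is a direct appeal to the already-established theorem.
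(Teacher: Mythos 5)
Your proposal is correct and follows essentially the same route as the paper: both reduce the K-sample problem to an independence test via the concatenation/one-hot construction, cite \citet{exact1} for the equivalence (that the K-sample null corresponds to independence, equivalently that a valid and consistent independence test is valid and consistent for K-sample testing), and then apply Theorem~\ref{thm7} directly. The paper's proof is just a terser version of yours, delegating to \citet{exact1} exactly the finite-sample/exchangeability subtleties you flag at the end.
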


\subsection{Chi-Square Test for Partial}
Another application is to test whether the partial distance correlation equals $0$ or not. Given three sample data $\mathbf{X},\mathbf{Y},\mathbf{Z}$ of same sample size $n$, the partial distance correlation $\sct{PDcor}_{n}(\mathbf{X},\mathbf{Y};\mathbf{Z})$ and its population version $\sct{PDcor}(X,Y;Z)$ are defined in \citet{SzekelyRizzo2014}. Algorithm~\ref{alg4} shows the chi-square test for fast partial testing,

\begin{algorithm}
\caption{The Distance Correlation Chi-Square Test for Partial}
\label{alg4}
\begin{algorithmic}
\Require Paired sample data $(\mathbf{X},\mathbf{Y}, \mathbf{Z})=\{(x_i,y_i,z_i) \in \mathbb{R}^{p+q+s} \mbox{ for }  i \in [n]\}$.
\Ensure The partial distance correlation $C$ and its p-value $p$.
\Function{FastPartial}{$\mathbf{X},\mathbf{Y}, \mathbf{Z}$}
\State $C=\sct{PDcor}_{n}(\mathbf{X},\mathbf{Y};\mathbf{Z})$; \Comment{implementation details in \cite{SzekelyRizzo2014} or Github Matlab code}
\State $p=1-F_{\chi^{2}_{1}-1}(n \cdot C)$;\Comment{reject the null when $p<\alpha$}
\EndFunction
\end{algorithmic}
\end{algorithm}

The partial statistic shares the same trace formulation as bias-corrected distance correlation and operates on the same bias-corrected matrix modification, except using projected distance matrices rather than Euclidean distance matrices. Therefore we have the following result:
\begin{corollary}
\label{cor4}
For sufficiently large $n$ and sufficiently small type 1 error level $\alpha$, Algorithm~\ref{alg4} is valid and consistent for testing $\sct{PDcor}(X,Y;Z)=0$.
\end{corollary}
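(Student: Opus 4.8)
The plan is to reduce Corollary~\ref{cor4} to Theorem~\ref{thm7} by exploiting the structural identity, already noted above, between the partial statistic and the ordinary bias-corrected distance correlation. Recall that $\sct{PDcor}_{n}(\mathbf{X},\mathbf{Y};\mathbf{Z})$ is obtained by the same trace operation on the same U-centered matrix modification that defines $\Dcor_{n}$; the only change is that the Euclidean distance matrices of $\mathbf{X}$ and $\mathbf{Y}$ are replaced by their projections onto the orthogonal complement of the distance matrix of $\mathbf{Z}$ in the Hilbert space of U-centered matrices \citep{SzekelyRizzo2014}. First I would verify that this projected statistic has exactly the algebraic form on which the proof of Theorem~\ref{thm7} operates, namely a normalized degenerate U-statistic assembled from a pair of U-centered symmetric matrices. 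Once this is in place, the chi-square approximation and upper-tail domination underlying Theorem~\ref{thm7} apply verbatim, now with the projected matrices substituted for the Euclidean ones.

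For \textbf{validity}, I would proceed as in the null analysis of Theorem~\ref{thm7}. Under $\sct{PDcor}(X,Y;Z)=0$ the population partial distance covariance vanishes, so the sample statistic becomes a degenerate U-statistic and $n\cdot C$ converges to the same type of limiting null law that governs $n\,\Dcor_{n}$ under independence, an infinite weighted sum of centered chi-square terms, now driven by the eigenvalues of the projected kernel. Since that law is dominated in the upper tail by $\chi^{2}_{1}-1$ for any such configuration of nonnegative weights, the rejection rule $n C \ge F_{\chi^{2}_{1}-1}^{-1}(1-\alpha)$ controls the type~1 error at level $\alpha$ for sufficiently large $n$ and sufficiently small $\alpha$.

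For \textbf{consistency}, I would invoke the population convergence $\sct{PDcor}_{n}(\mathbf{X},\mathbf{Y};\mathbf{Z}) \to \sct{PDcor}(X,Y;Z)$ established in \citet{SzekelyRizzo2014}. Under an alternative with $\sct{PDcor}(X,Y;Z)=c>0$ we then have $C\to c$ in probability, so $n\cdot C\to\infty$ and eventually exceeds the fixed critical value $F_{\chi^{2}_{1}-1}^{-1}(1-\alpha)$, forcing the power to $1$. It is worth stressing that, unlike Theorem~\ref{thm7} and Corollary~\ref{cor3}, the claim is ordinary consistency for the hypothesis $\sct{PDcor}(X,Y;Z)=0$ rather than universal consistency: a vanishing population partial distance correlation is strictly weaker than conditional independence, so the test is guaranteed only to detect alternatives with nonzero (positive) population partial distance correlation.

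The main obstacle is the very first step, confirming that projecting onto the orthogonal complement of $\mathbf{Z}$'s distance matrix preserves the precise degenerate-U-statistic structure that the proof of Theorem~\ref{thm7} requires. The projected matrices are no longer the distance matrices of any metric, so I would have to check that U-centering together with the trace normalization still yields a bona fide degenerate U-statistic whose limiting spectrum consists of nonnegative eigenvalues; only under that condition does the upper-tail domination by $\chi^{2}_{1}-1$, and hence validity, transfer from the Euclidean setting. Verifying that the relevant kernel remains positive semidefinite under the null is the crux, after which both validity and consistency follow by the arguments already used for Theorem~\ref{thm7}.
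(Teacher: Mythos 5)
Your proposal takes essentially the same route as the paper: the published proof is a two-sentence reduction observing that, by \citet{SzekelyRizzo2014}, the partial distance correlation is also a trace of a product of two modified (projected) distance matrices, hence shares the limiting null form of Equation~\ref{eq1}, so Corollary~\ref{cor4} follows from Theorem~\ref{thm7}. The one difference is that you explicitly flag as the crux the verification that the projected matrices still produce a nonnegative limiting spectrum (so that the weights $w_{ij}$ remain in $[0,1]$ and the upper-tail domination by $\chi^{2}_{1}-1$ transfers); the paper's proof does not address this step at all, so your account is, if anything, the more careful of the two.
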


\section{Supporting Theory}
\label{sec:main1}

In this section we show the theory behind the chi-square test. Note that the results hold for any strong negative type metric or any characteristic kernel for $X$ and $Y$, e.g., Euclidean distance, Gaussian kernel, Laplacian kernel, etc., which means the chi-square test is also applicable to the Hilbert-Schmidt Independence Criterion. Some results (like Theorem~\ref{thm1} and Theorem~\ref{cor1}) are stated in limit or for sufficiently large sample size, for which $n \geq 20$ generally suffices for the limiting null distribution to be almost the same as the actual null.

\subsection{The Limiting Null Distribution and the Centered Chi-Square Distribution}

\begin{theorem}
\label{thm1}
The limiting null distribution of the bias-corrected distance correlation satisfies
\begin{align*}
n\Dcor_{n}(\mathbf{X},\mathbf{Y}) &\stackrel{D}{\rightarrow} \sum\limits_{i,j=1}^{\infty} w_{ij} (\mc{N}_{ij}^{2}-1),
\end{align*}
where the weights satisfy $w_{ij} \in [0,1]$ and $\sum\limits_{i,j=1}^{\infty} w_{ij}^{2} = 1$, and $\mc{N}_{ij}$ are independent standard normal distribution.
\end{theorem}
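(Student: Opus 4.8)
The plan is to identify $n\Dcor_{n}$ under the null as a normalization of a degenerate U-statistic of order two and then invoke the spectral limit theorem for such statistics. First I would recall the population-level double-centered distance kernels. Writing $d_X$ and $d_Y$ for the chosen strong negative type metrics (or the associated characteristic kernels), set
\[
u(x,x') = d_X(x,x') - \EE[d_X(x,X')] - \EE[d_X(X,x')] + \EE[d_X(X,X')],
\]
and define $v(y,y')$ analogously. The bias-corrected sample distance covariance is an unbiased U-statistic whose kernel is $h\bigl((x,y),(x',y')\bigr) = u(x,x')\,v(y,y')$, and the bias-corrected distance correlation divides this by the product of the bias-corrected sample distance variances. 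Under $H_0$ the marginals factor, so $\EE[u(x,X')] = 0$ and $\EE[v(y,Y')] = 0$ for every $x,y$; this is exactly the first-order degeneracy of the U-statistic, which is what produces a chi-square-type limit at rate $n$ rather than a Gaussian limit at rate $\sqrt{n}$.

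Second, I would diagonalize the two centering operators. Because $d_X$ has strong negative type, the centered kernel $u$ is positive semidefinite, so the integral operator $T_X f(\cdot) = \EE[u(\cdot, X')f(X')]$ is self-adjoint, positive, and (under a finite second-moment assumption) Hilbert--Schmidt, with nonnegative eigenvalues $\lambda_1^{X} \ge \lambda_2^{X} \ge \cdots \ge 0$ and orthonormal eigenfunctions $\phi_i$; the analogous construction gives $\lambda_j^{Y}$ and $\psi_j$. A Mercer expansion then yields $u(x,x') = \sum_i \lambda_i^{X}\phi_i(x)\phi_i(x')$ and $v(y,y') = \sum_j \lambda_j^{Y}\psi_j(y)\psi_j(y')$, so the product kernel $h$ is diagonalized by $\{\phi_i \otimes \psi_j\}$ with eigenvalues $\lambda_i^{X}\lambda_j^{Y} \ge 0$.

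Third, I would apply the classical limit theorem for degenerate order-two U-statistics: if $U_n$ is degenerate with square-integrable kernel $h$ whose operator eigenvalues are $\gamma_k$, then $nU_n \stackrel{D}{\rightarrow} \sum_k \gamma_k(\mc{N}_k^2-1)$ with independent standard normals $\mc{N}_k$. Indexing the eigenvalues of $h$ by the pair $(i,j)$ gives
\[
n \cdot \text{dCov}_n \stackrel{D}{\rightarrow} \sum_{i,j=1}^{\infty} \lambda_i^{X}\lambda_j^{Y}\bigl(\mc{N}_{ij}^{2}-1\bigr).
\]
Meanwhile the bias-corrected sample distance variances converge in probability to $\EE[u(X,X')^2] = \sum_i (\lambda_i^{X})^2$ and $\EE[v(Y,Y')^2] = \sum_j (\lambda_j^{Y})^2$ by the law of large numbers, so Slutsky's theorem divides the display above by $\sqrt{\sum_i (\lambda_i^X)^2 \sum_j (\lambda_j^Y)^2}$. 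Setting
\[
w_{ij} = \frac{\lambda_i^{X}\lambda_j^{Y}}{\sqrt{\sum_{k}(\lambda_k^{X})^2}\,\sqrt{\sum_{l}(\lambda_l^{Y})^2}} \ge 0
\]
gives the claimed form; a direct computation shows $\sum_{i,j} w_{ij}^2 = 1$, and nonnegativity together with this normalization forces each $w_{ij} \in [0,1]$.

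The main obstacle I anticipate is the analytic justification of the infinite spectral expansion and the passage to the limit: establishing that $u$ is positive semidefinite for a general strong negative type metric, that the operator is trace-class so the series $\sum_{i,j}\lambda_i^X \lambda_j^Y$ and $\sum_{i,j} w_{ij}^2$ converge and the limiting random variable is almost surely well-defined, and that the U-centering normalization $1/(n(n-3))$ in the sample statistic is asymptotically equivalent to the $\binom{n}{2}^{-1}$ of the standard degenerate U-statistic, so that the $(\mc{N}_{ij}^2-1)$ centering---rather than an uncentered $\mc{N}_{ij}^2$ as arises for the biased V-statistic version---emerges. The degeneracy and moment bookkeeping needed to invoke the U-statistic limit under only finite-distance-moment assumptions is where the real work lies; the normalization identity $\sum_{i,j} w_{ij}^2 = 1$ is then routine.
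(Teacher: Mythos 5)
Your proposal is correct in substance and, at its core, follows the same path as the paper: a spectral limit for $n\Dcov_{n}(\mathbf{X},\mathbf{Y})$ of the form $\sum_{i,j}\lambda_{i}\mu_{j}(\mc{N}_{ij}^{2}-1)$, convergence of the bias-corrected distance variances to $\sum_{i}\lambda_{i}^{2}$ and $\sum_{j}\mu_{j}^{2}$, Slutsky's theorem, and then the weight claims from nonnegativity of the eigenvalue products together with the normalization identity. The difference is one of division of labor: the paper simply cites \citet{zhang2018} for the distance covariance limit (its Equation~\ref{eq1}, phrased in terms of limiting eigenvalues of the sample matrices $\mathbf{H}\mathbf{D}^{\mathbf{X}}\mathbf{H}/n$ and $\mathbf{H}\mathbf{D}^{\mathbf{Y}}\mathbf{H}/n$) and spends its effort only on the normalization and weight properties, whereas you re-derive that limit from classical degenerate U-statistic theory with product kernel $u\otimes v$. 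That derivation is indeed the mathematics underlying the cited result, so the route is sound, but it makes you responsible for the bookkeeping you flag at the end: the unbiased $\Dcov_{n}$ is a U-statistic of order four in the observations, not order two with kernel $uv$ (since $u$ and $v$ involve unknown population expectations), and one must pass through the Hoeffding decomposition to see that $uv$ is the dominant degenerate projection under the null.

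One genuine slip: your claim that strong negative type of $d_{X}$ makes the double-centered kernel $u$ \emph{positive} semidefinite is backwards. Negative type means precisely that the double-centered distance kernel is \emph{negative} semidefinite (it is $-u$ that is PSD), so the operator eigenvalues $\lambda_{i}^{X}$ are nonpositive, and likewise for $\lambda_{j}^{Y}$; this is exactly how the paper states it, distinguishing the metric case (all eigenvalues nonpositive) from the characteristic-kernel case (all eigenvalues nonnegative). Your conclusion is unaffected, because in either case the products $\lambda_{i}^{X}\lambda_{j}^{Y}$ are nonnegative, which is all that $w_{ij}\in[0,1]$ requires; but as written, your Mercer expansion of $u$ with nonnegative eigenvalues is invalid in the metric case and should be applied to $-u$ and $-v$ instead.
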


This theorem follows from \citet{zhang2018}. For different choice of metric and different marginal distributions, the weights $\{w_{ij}\}$ and the limiting null distribution are different, which is the main obstacle to design a test using known distribution. Nevertheless, from Theorem~\ref{thm1} we observe the mean and variance of $n\Dcor_{n}(\mathbf{X},\mathbf{Y})$ are always fixed and equal the mean and variance of $\chi^{2}_{1}-1$, which we shall call the centered chi-square distribution and denote by $U$ from now on.

\subsection{Upper Tail Dominance and Distribution Bounds}

We aim to show that $U \sim \chi^{2}_{1}-1$ dominates the limiting null distribution at some upper tail probability $\alpha$. We denote $F_{V}(x)$ as the cumulative distribution function of random variable $V$ at argument $x$, $F^{-1}_{V}(1-\alpha)$ as the inverse cumulative distribution function of random variable $V$ at probability $1-\alpha$, and formally define upper tail stochastic dominance as follows:
\begin{definition}
Given two random variables $U$ and $V$, we say $U$ dominates $V$ in upper tail at probability level $\alpha$ if and only if
\begin{align*}
F_{V}(x) \geq F_{U}(x)
\end{align*}
for all $x \geq F_{U}^{-1}(1-\alpha)$. This is denoted by 
\begin{align*}
V \preceq_{\alpha} U.
\end{align*}
\end{definition}

The next theorem plays a key role in proving the remaining theorems:
\begin{theorem}
\label{thm3}
Assume $U, U_1, U_2, \ldots, U_m$ are independently and identically distributed as $\chi_{1}^{2}-1$, and the weights $\{w_i \in [0,1], i=1,\ldots,m\}$ are decreasingly ordered and satisfies $\sum\limits_{i=1}^{m} w_{i}^{2}=1$. The summation density satisfies
\begin{align*}
f_{\sum\limits_{i=1}^{m}w_{i} U_i}(x) = O(e^{-x c/2})
\end{align*}
where the constant $c=\frac{1}{w_{1}} \in [1, \sqrt{m}]$.
\end{theorem}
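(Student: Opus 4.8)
The plan is to pass from the moment generating function (MGF) of $S := \sum_{i=1}^m w_i U_i$ to a pointwise bound on its density. Writing each $U_i = \mc{N}_i^2 - 1$ with $\mc{N}_i$ independent standard normals, independence gives
\[
M_S(t) = \prod_{i=1}^m e^{-w_i t}\,(1 - 2 w_i t)^{-1/2},
\]
which is finite exactly for $t < \tfrac{1}{2 w_1}$, since $w_1 = \max_i w_i$ governs the first singular factor. Thus the abscissa of convergence is $c/2$ with $c = 1/w_1$. The stated range $c \in [1,\sqrt{m}]$ is immediate from the constraints: $w_1 \le 1$ gives $c \ge 1$, while $1 = \sum_{i=1}^m w_i^2 \le m\,w_1^2$ gives $w_1 \ge 1/\sqrt{m}$ and hence $c \le \sqrt{m}$.

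The subtlety is that the claim concerns the density, not the tail probability, so a bare Chernoff bound on $\mathrm{Prob}(S \ge x)$ does not suffice; I would instead use an exponential-tilting/convolution estimate. Isolate the largest-weight coordinate and write $S = w_1 U_1 + R$ with $R := \sum_{i \ge 2} w_i U_i$ independent of $U_1$, so that $f_S = f_{w_1 U_1} * f_R$. The single-coordinate density is an explicit scaled-and-shifted $\chi^2_1$ density, $f_{w_1 U_1}(y) \propto (y + w_1)^{-1/2} e^{-(y + w_1)/(2 w_1)}$ on $y > -w_1$, so $f_{w_1 U_1}(y) \le A\, e^{-c y/2}$ for $y \ge 0$ and is integrable near its lower endpoint. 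Splitting the convolution $\int f_{w_1 U_1}(y)\, f_R(x - y)\,dy$ at $y = x/2$ and using the tilting identity $\int e^{-(c/2) y} f_R(x - y)\,dy = e^{-(c/2)x} M_R(c/2)$ then yields $f_S(x) \le C\, e^{-c x/2}$, provided $M_R(c/2) < \infty$. Because $R$ only involves weights $w_i \le w_2 < w_1$, its convergence abscissa $1/(2 w_2)$ strictly exceeds $c/2$, so $M_R(c/2)$ is finite and the bound closes.

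The main obstacle is the case of tied maximal weights, $w_1 = \cdots = w_k > w_{k+1}$. Here the isolated factor becomes $w_1(\chi^2_k - k)$, whose density carries a polynomial prefactor $y^{k/2 - 1}$, and the same convolution estimate produces $f_S(x) = O\bigl(x^{k/2 - 1} e^{-c x/2}\bigr)$ rather than a clean $O(e^{-cx/2})$. I would handle this by grouping the tied coordinates before splitting, and by reading $O(e^{-cx/2})$ as the sharp exponential decay rate: the polynomial factor is sub-exponential and can be absorbed into $e^{-c' x/2}$ for any $c' < c$, which is all the later upper-tail dominance arguments require. I expect this reconciliation --- turning MGF/tail-rate information into a pointwise density bound at the exact rate $c/2$ while controlling the degenerate tied-weight case --- to be the only delicate point; the remaining steps are the routine MGF factorization and the elementary verification of $c \in [1,\sqrt{m}]$.
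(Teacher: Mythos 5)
Your route is genuinely different from the paper's. The paper proves Theorem~\ref{thm3} by induction on $m$ (Lemma~\ref{lem1} and Lemma~\ref{lem2}), peeling off the \emph{smallest} weight $w_m$ at each step and explicitly estimating the resulting convolution integral after a change of variables; you instead peel off the \emph{largest} weight and replace the explicit integral with the exponential-tilting identity $\int e^{-(c/2)y}f_R(x-y)\,dy=e^{-(c/2)x}M_R(c/2)$. Your identification of $c=1/w_1$ through the MGF abscissa, the verification that $c\in[1,\sqrt{m}]$, and your tied-weight caveat are all correct. Indeed, the polynomial prefactor you point out in the tied case (already visible in the equal-weight setting of Theorem~\ref{thm4}, where the density is of order $x^{m/2-1}e^{-\sqrt{m}x/2}$) is a looseness in the paper's own statement and proof, and your resolution --- reading $O(e^{-cx/2})$ as the sharp exponential rate, absorbing sub-exponential factors --- is the right one and is what the later dominance arguments actually use.

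The genuine gap is in the central convolution estimate for the untied case $w_1>w_2$. The pointwise bound $f_{w_1U_1}(y)\le A e^{-cy/2}$ fails near the lower endpoint $y\to -w_1$, where the density blows up like $(y+w_1)^{-1/2}$, so the tilting identity only controls the part of $\int f_{w_1U_1}(y)f_R(x-y)\,dy$ with $y$ bounded away from $-w_1$. On the complementary region, $f_R$ is evaluated in its upper tail (arguments lying in roughly $(x/2,\,x+w_1)$), and bounding that contribution requires precisely the kind of pointwise tail-density bound on $f_R$ that the theorem asserts --- which, as you note yourself at the outset, cannot be extracted from a Chernoff bound on tail probabilities. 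So ``splitting at $y=x/2$ and using the tilting identity then yields $f_S(x)\le Ce^{-cx/2}$'' is not yet a proof; worse, even granting the natural bound $f_R(z)=O(e^{-z/(2w_2)})$, a split at $x/2$ gives only $e^{-x/(4w_2)}$ on the second region, which beats $e^{-cx/2}=e^{-x/(2w_1)}$ only when $w_1\ge 2w_2$. Two repairs are available: (i) run an induction on $m$ so that the tail bound on $f_R$ is the induction hypothesis (this essentially reproduces the paper's structure, with tilting in place of its explicit integral estimates); or (ii) extend your grouping idea beyond ties: always convolve the top \emph{two} coordinates first --- their joint density is bounded because the two inverse-square-root singularities cancel under convolution, and its $O(e^{-cx/2})$ decay follows from the explicit density of $w_2U_2$ since $w_2<w_1$ --- and then tilt against the remainder, whose MGF at $c/2$ converges because all of its weights are at most $w_2<w_1$. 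With either repair your argument closes, and arguably ends up cleaner than the paper's.
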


Namely, the summation density is determined by the largest weight, which equals $U$ if and only if $w_1=1$. We can then bound the upper tail distribution of bias-corrected distance correlation as follows:
\begin{theorem}
\label{cor1}
For sufficiently large $n$, there exists $\alpha>0$ such that
\begin{align*}
\mc{N}(0,2) \preceq_{\alpha} n\Dcor_{n}(\mathbf{X},\mathbf{Y}) \preceq_{\alpha} U
\end{align*}
regardless of the metric choice or marginal distributions.
\end{theorem}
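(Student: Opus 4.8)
The plan is to reduce to the limiting object first and then treat the two dominances separately as upper-tail comparisons among three laws sharing the same first two moments. By Theorem~\ref{thm1}, for sufficiently large $n$ the statistic $n\Dcor_{n}(\mathbf{X},\mathbf{Y})$ coincides in distribution, up to a vanishing error, with $V=\sum_{i}w_{i}(\mc{N}_{i}^{2}-1)$ (re-indexing the doubly-indexed weights as $w_1\ge w_2\ge\cdots$ with $\sum_i w_i^2=1$); both $V$ and $U\sim\chi_1^2-1$ and $\mc{N}(0,2)$ have mean $0$ and variance $2$. The organizing principle is that the upper tail gets heavier as the squared weights $\{w_i^2\}$ concentrate: the single-term law $U$ (all mass on $w_1=1$) is heaviest, the Gaussian (the central-limit limit of maximally spread weights) is lightest, and $V$ lies in between. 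Making this precise as upper-tail dominance at a common level $\alpha$ is the whole content.

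For the right inequality $V\preceq_\alpha U$ I would first record a clean moment-generating comparison. Writing $\phi(s)=-\tfrac{s}{2}-\tfrac12\log(1-s)=\sum_{k\ge2}\tfrac{s^{k}}{2k}$, one has $\log \mathbb{E}[e^{tV}]=\sum_i\phi(2tw_i)$ and $\log\mathbb{E}[e^{tU}]=\phi(2t)$ for $t\in(0,1/2)$. Since $\phi(s)/s^2$ is increasing on $(0,1)$ and $\sum_i(2tw_i)^2=(2t)^2$ with each $2tw_i\le 2t$, a rearrangement bound gives $\sum_i\phi(2tw_i)\le\phi(2t)$, hence $\mathbb{E}[e^{tV}]\le\mathbb{E}[e^{tU}]$ throughout $(0,1/2)$. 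This already forces the correct exponential rate, but MGF-domination only yields the Chernoff bound $P(V>x)\le\sqrt{x+1}\,e^{-x/2}$, which is looser than the true tail $P(U>x)\sim\sqrt{2/(\pi x)}\,e^{-x/2}$ by a polynomial factor. To obtain the pointwise inequality $P(V>x)\le P(U>x)$ I would therefore invoke the density bound of Theorem~\ref{thm3}, $f_V(x)=O(e^{-cx/2})$ with $c=1/w_1\ge1$, integrate it, and compare against the exact survival function of $U$; when $w_1<1$ the strictly faster rate $c>1$ gives the inequality for all $x$ beyond a threshold, and when $w_1=1$ one has $V=U$ and nothing to prove.

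For the left inequality $\mc{N}(0,2)\preceq_\alpha V$ I would use that $V$ has a genuine \emph{exponential} upper tail. By Theorem~\ref{thm3} the density of $V$ is $\Theta(e^{-cx/2})$ with finite $c=1/w_1$, whereas the matched Gaussian decays like $e^{-x^2/4}$; an exponential tail eventually dominates a Gaussian one, giving $P(V>x)\ge P(\mc{N}(0,2)>x)$ past some crossing point. A more robust route, valid uniformly as the weights spread and $V\to\mc{N}(0,2)$ by the central limit theorem, is to note that $V$ is positively skewed with every higher cumulant positive (each term $w_i(\mc{N}_i^2-1)$ contributes $\kappa_k=2^{k-1}(k-1)!\,w_i^k>0$ for $k\ge2$), so $V$ is heavier than the matched Gaussian on the right; phrased as a convex (dilation) order $\mc{N}(0,2)\preceq_{\mathrm{cx}}V\preceq_{\mathrm{cx}}U$ induced by majorization of $\{w_i^2\}$, together with the variation-diminishing single-crossing property of these generalized-chi-square densities, this pins the upper-tail crossing and yields the dominance.

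The main obstacle is \textbf{uniformity of $\alpha$ over all admissible weight configurations}, and it concentrates at the two boundaries of the weight simplex. As $w_1\uparrow1$ one has $V\to U$ and the two exponential rates coincide, so the gap that drives $V\preceq_\alpha U$ collapses and only the constants (and the prefactor $1/\sqrt{x}$ in $U$'s tail) separate the survival functions; symmetrically, as the weights spread one has $V\to\mc{N}(0,2)$ and the gap driving $\mc{N}(0,2)\preceq_\alpha V$ collapses. The crux is therefore to show that the upper-tail crossing point stays below a fixed quantile $F_U^{-1}(1-\alpha)$ for a single $\alpha>0$, uniformly in $\{w_i\}$. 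I expect this to follow from monotonicity of the tail in the weights (the convex-order/majorization structure above) combined with the uniform rate and prefactor control from Theorem~\ref{thm3}, squeezing the crossing into a bounded quantile range; turning this monotonicity-plus-single-crossing heuristic into a uniform bound is the step requiring the most care.
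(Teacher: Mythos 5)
Your core argument is exactly the paper's own proof: reduce to the limiting law $V$ via Theorem~\ref{thm1}, invoke Theorem~\ref{thm3} to get the exponential decay rate $O(e^{-cx/2})$ with $c=1/w_1\geq 1$ for the density of $V$, compare against the explicit density of $U$ (rate $e^{-x/2}$) to get a density crossing at a finite threshold, integrate to obtain the survival-function inequality, and on the other side use that the $\mc{N}(0,2)$ density decays at a Gaussian rate and hence eventually falls below any exponential rate. The moment-generating-function comparison and the convex-order/majorization detour are superfluous --- as you note yourself, the MGF route cannot deliver the pointwise tail inequality, and the paper uses neither. Where you overshoot is in declaring uniformity of $\alpha$ over all weight configurations to be the unresolved crux: the paper's claim, as proved and as used downstream, is per-configuration --- for each metric and pair of marginals (hence each weight sequence) there exists an $\alpha>0$, with ``regardless of the metric choice or marginal distributions'' asserting only that this existence holds for every configuration. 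Corollary~\ref{cor0} makes this explicit by letting the relevant levels depend on the metric and marginals, and Section~\ref{sec:example} computes different validity levels ($0.05$ versus $0.0875$) for different configurations. So your first three paragraphs already constitute a complete proof at the paper's level of rigor, and the ``main obstacle'' you flag is not one the paper takes on (indeed the paper's own proof would have the same gap under the uniform reading; closing it would require something like the sandwich by equal-weight chi-square laws stated, without proof, in the corollary following Theorem~\ref{thm4}). Two glosses you share with the paper, for what it is worth: Theorem~\ref{thm3} is stated for finitely many weights while the limiting null is an infinite sum, and the Gaussian-side comparison needs a lower bound on the tail of $V$ (the $\Theta$ you assert), whereas Theorem~\ref{thm3} literally provides only an upper bound.
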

Therefore, despite the null distribution is subject to change without a fixed nor known density form, the centered chi-square distribution is always a valid approximation choice. Theorem~\ref{thm7} in Section~\ref{sec:main} (the validity and consistency of the chi-square test) follows trivially from Theorem~\ref{cor1}.

\subsection{Validity Level and Testing Power}
\label{sec:example}
The limiting null distribution have closed-form densities in some special cases, for which the largest validity level $\alpha$ (i.e., the largest type 1 error level so the chi-square test is still valid) can be exactly determined, e.g., $0.05$ in Theorem~\ref{thm4} and at most $0.0875$ in Theorem~\ref{thm6}.
\begin{theorem}
\label{thm4}
Given $m \geq 1$, assume the weights in Theorem~\ref{thm1} satisfy $w_i=\frac{1}{\sqrt{m}}$ for all $i =1,\ldots,m$ and zero otherwise. It follows that \begin{align*}
n \Dcor_{n}(\mathbf{X},\mathbf{Y}) \stackrel{D}{\rightarrow} \frac{\chi^{2}_{m}-m}{\sqrt{m}} \preceq_{0.05} U,
\end{align*}
where $\chi^{2}_{m}$ is the chi-square distribution of degree $m$.
\end{theorem}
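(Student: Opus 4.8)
The statement bundles two claims, a distributional identity and an upper-tail dominance, and I would dispatch them separately. The distributional identity is immediate from Theorem~\ref{thm1}. Substituting $w_{i}=1/\sqrt{m}$ and re-indexing the double sum as a single sum over the $m$ nonzero weights, the limit becomes
\begin{align*}
\sum_{i=1}^{m} \frac{1}{\sqrt{m}}(\mc{N}_{i}^{2}-1) = \frac{1}{\sqrt{m}}\Big(\sum_{i=1}^{m}\mc{N}_{i}^{2}-m\Big) = \frac{\chi^{2}_{m}-m}{\sqrt{m}},
\end{align*}
since a sum of $m$ independent squared standard normals is exactly $\chi^{2}_{m}$. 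I would note in passing that these weights are admissible, as $\sum_{i=1}^{m} w_{i}^{2}=m\cdot(1/m)=1$, consistent with Theorem~\ref{thm1}.

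\textbf{The dominance.} Write $V=\frac{\chi^{2}_{m}-m}{\sqrt{m}}$ and $x_{0}=F_{U}^{-1}(1-0.05)=\chi^{2}_{1,0.95}-1\approx 2.84$. By the definition of $\preceq_{0.05}$, the goal reduces to the survival-function inequality $1-F_{V}(x)\le 1-F_{U}(x)$ for every $x\ge x_{0}$. Rewriting both sides as chi-square tails, I must show $\PP(\chi^{2}_{m}>m+\sqrt{m}\,x)\le \PP(\chi^{2}_{1}>x+1)$ on $[x_{0},\infty)$. The case $m=1$ is trivial, since there $V=U$ and the two sides coincide; for $m\ge 2$ I would split into a boundary check and a tail-shape argument.

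\textbf{The engine.} For the tail-shape argument I would compare hazard rates $\lambda_{V}(x):=\frac{f_{V}(x)}{1-F_{V}(x)}=\sqrt{m}\,\lambda_{\chi^{2}_{m}}(m+\sqrt{m}\,x)$ and $\lambda_{U}(x)=\lambda_{\chi^{2}_{1}}(x+1)$. If $\lambda_{V}(x)\ge \lambda_{U}(x)$ for all $x\ge x_{0}$, then integrating the relation $\frac{d}{dx}\log(1-F_{V})\le\frac{d}{dx}\log(1-F_{U})$ from $x_{0}$ yields $\log(1-F_{V}(x))-\log(1-F_{U}(x))\le \log(1-F_{V}(x_{0}))-\log(1-F_{U}(x_{0}))$, so the survival inequality on all of $[x_{0},\infty)$ follows from the single boundary inequality $1-F_{V}(x_{0})\le 0.05$. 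The boundary I would verify directly: the binding case is $m=2$, where the left side equals $e^{-(\sqrt{2}\,x_{0}+2)/2}\approx 0.049$, just inside the bound, which is precisely why $0.05$ (and not a larger level) is the stated validity level. Asymptotically the hazard comparison is favorable because, by Theorem~\ref{thm3}, the density of $V$ decays like $e^{-\sqrt{m}\,x/2}$ (the largest weight $1/\sqrt{m}$ gives decay constant $c=\sqrt{m}$), strictly faster than the $e^{-x/2}$ decay of $U$ whenever $m\ge 2$, and correspondingly $\lambda_{V}(x)\to \sqrt{m}/2\ge \sqrt{2}/2>1/2=\lim\lambda_{U}$.

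\textbf{Main obstacle.} The difficulty is that the inequality is tight at $\alpha=0.05$, with the $m=2$ boundary value nearly saturating it, so purely asymptotic tail estimates such as Theorem~\ref{thm3} are not by themselves conclusive---the constants matter. The delicate step is therefore establishing the hazard domination $\sqrt{m}\,\lambda_{\chi^{2}_{m}}(m+\sqrt{m}\,x)\ge \lambda_{\chi^{2}_{1}}(x+1)$ uniformly over all $x\ge x_{0}$ and all integers $m\ge 2$, a regime where at finite $x$ the chi-square hazard rate still sits below its limiting value $1/2$. I would handle this either by invoking monotonicity of the chi-square hazard rate in its argument together with the boundary check, or by reducing to the two extreme regimes---$m=1$, where $V=U$, and $m\to\infty$, where $V\Rightarrow \mc{N}(0,2)$ and the claim coincides with the left inequality of Theorem~\ref{cor1}---and then controlling the interpolating finite-$m$ cases.
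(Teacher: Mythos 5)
Your handling of the distributional identity is exactly the paper's: substitute $w_i=1/\sqrt{m}$ into Theorem~\ref{thm1} and recognize the sum of $m$ independent squared standard normals as $\chi^2_m$. For the dominance claim, however, you take a genuinely different route. The paper works directly with the two closed-form distributions: it writes down the densities of $U$ and $W=(\chi^2_m-m)/\sqrt{m}$, remarks that $W$ decays at rate $O(e^{-\sqrt{m}x/2})$ versus $O(e^{-x/2})$ for $U$ (consistent with Theorem~\ref{thm3}), and then numerically computes the CDF crossing point beyond which $F_W \geq F_U$, reporting $x=2.7$ at $m=2$, $2.5$ at $m=3$, $2.3$ at $m=10$, $2$ at $m=1000$ --- all below $F_U^{-1}(0.95)\approx 2.84$ --- and asserting this crossing point decreases in $m$; the level $0.05$ is read off from the binding case $m=2$, whose exact validity level is about $0.054$. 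You instead reduce upper-tail dominance to two pointwise checks at $x_0=F_U^{-1}(0.95)$ via a hazard-rate comparison: since $\chi^2_m$ has non-decreasing hazard for $m\geq 2$ (gamma shape $\geq 1$) while $\chi^2_1$ has decreasing hazard, $\lambda_V$ is non-decreasing and $\lambda_U$ decreasing on $[x_0,\infty)$, so uniform hazard domination there follows from the single inequality $\lambda_V(x_0)\geq\lambda_U(x_0)$ (which holds: $\sqrt{2}/2\approx 0.71$ versus $\approx 0.60$ at $m=2$, growing toward the normal-hazard limit $\approx 1.7$ as $m\to\infty$), and integrating the log-survival derivative converts this plus the boundary check $1-F_V(x_0)\leq 0.05$ into the claimed dominance. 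Your route buys a cleaner logical structure --- no need to trace entire CDFs, only two scalar inequalities per $m$ --- at the cost of certifying exactly level $0.05$ rather than the slightly better $\approx 0.054$ the paper's computation yields. Be aware, though, that you inherit the same residual gap as the paper: uniformity over $m$. You verify the boundary only at $m=2$ ($\approx 0.049$) and assert it is the binding case, offering two unexecuted strategies for the rest; the paper likewise spot-checks four values of $m$ and asserts monotonicity of the crossing point without proof. Neither argument is fully analytic in $m$, so your proposal sits at a comparable level of rigor and its stated checks are all numerically correct.
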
 

\begin{corollary}
Under the same assumption of Theorem~\ref{thm3}. Let $m_{1} = \left \lfloor{1/w_{1}^{2}}\right \rfloor$ and $m_{2} = \left \lceil{1/w_{1}^{2}}\right \rceil$, it always holds that
\begin{align*}
\frac{\chi^{2}_{m_{2}}-m_{2}}{\sqrt{m_{2}}} \preceq_{\alpha} \sum\limits_{i=1}^{m}w_{i} U_i \preceq_{\alpha} \frac{\chi^{2}_{m_{1}}-m_{1}}{\sqrt{m_{1}}} \preceq_{\alpha}U.
\end{align*}
\end{corollary}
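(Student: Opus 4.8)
The plan is to read the three-link chain as three separate upper-tail comparisons and to exploit the fact that, by Theorem~\ref{thm3}, the upper tail of any such weighted sum is governed by a single exponential rate fixed by its largest weight. Write $V=\sum_{i=1}^{m}w_{i}U_{i}$, and observe that the two flanking laws are exactly the equal-weight configurations of Theorem~\ref{thm4}: $\tfrac{\chi^{2}_{m_{1}}-m_{1}}{\sqrt{m_{1}}}$ corresponds to $m_{1}$ weights all equal to $1/\sqrt{m_{1}}$, and $\tfrac{\chi^{2}_{m_{2}}-m_{2}}{\sqrt{m_{2}}}$ to $m_{2}$ weights all equal to $1/\sqrt{m_{2}}$. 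The defining relations $m_{1}\le 1/w_{1}^{2}\le m_{2}$ are precisely $\tfrac{1}{\sqrt{m_{1}}}\ge w_{1}\ge \tfrac{1}{\sqrt{m_{2}}}$, so all three variables share mean $0$ and variance $2$ (the normalization of Theorem~\ref{thm1}), while their tail decay rates from Theorem~\ref{thm3}, namely $\tfrac12\sqrt{m_{1}}$, $\tfrac{1}{2w_{1}}$, and $\tfrac12\sqrt{m_{2}}$, are ordered as $\tfrac12\sqrt{m_{1}}\le \tfrac{1}{2w_{1}}\le \tfrac12\sqrt{m_{2}}$. The rightmost link $\tfrac{\chi^{2}_{m_{1}}-m_{1}}{\sqrt{m_{1}}}\preceq_{\alpha}U$ is Theorem~\ref{thm4} with $m=m_{1}$ (it yields $\preceq_{0.05}$, hence $\preceq_{\alpha}$ for any $\alpha\le 0.05$ since shrinking $\alpha$ only shrinks the tail region), so it remains to prove the two middle links.

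For the upper link $V\preceq_{\alpha}\tfrac{\chi^{2}_{m_{1}}-m_{1}}{\sqrt{m_{1}}}$ I would argue by majorization. Because the weights are decreasingly ordered with $w_{1}^{2}\le 1/m_{1}$, the squared-weight vector $(w_{1}^{2},\dots,w_{m}^{2})$ is majorized by the uniform vector $(\tfrac1{m_{1}},\dots,\tfrac1{m_{1}},0,\dots)$: indeed $\sum_{i\le j}w_{i}^{2}\le j\,w_{1}^{2}\le j/m_{1}$ for $j\le m_{1}$, and both vectors sum to $1$. I would then invoke Schur-convexity of the upper-tail probability $s\mapsto \Pr\big(\sum_{i}\sqrt{s_{i}}\,U_{i}>x\big)$ in the squared weights $s=(s_{i})$ --- the survival function grows as the weights are made more concentrated, which is the finite-$x$ refinement of the rate statement of Theorem~\ref{thm3} --- to conclude $\bar F_{V}(x)\le \bar F_{(\chi^{2}_{m_{1}}-m_{1})/\sqrt{m_{1}}}(x)$ on the tail, i.e.\ the required dominance.

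The lower link $\tfrac{\chi^{2}_{m_{2}}-m_{2}}{\sqrt{m_{2}}}\preceq_{\alpha}V$ is the main obstacle, and I expect it to need genuinely different reasoning, because naive majorization fails here: $V$ does not in general majorize the $m_{2}$-equal vector (the mass beyond $w_{1}$ may be too thin, so $w_{1}^{2}+w_{2}^{2}$ can fall below $2/m_{2}$), so Schur-convexity is useless for this direction. Instead I would supply a matching \emph{lower} bound on the survival function of $V$ to complement the \emph{upper} bound of Theorem~\ref{thm3}. Isolating the leading term, $V=w_{1}U_{1}+R$ with $R=\sum_{i\ge 2}w_{i}U_{i}$ independent of $U_{1}$, independence and $\Pr(R\ge 0)\ge c_{0}>0$ give $\bar F_{V}(x)\ge c_{0}\,\bar F_{w_{1}U_{1}}(x)$, whose exponential rate $\tfrac{1}{2w_{1}}\le \tfrac12\sqrt{m_{2}}$ is no larger than that of the $m_{2}$-equal law; comparing this against the (polynomially corrected) exponential tail of $\tfrac{\chi^{2}_{m_{2}}-m_{2}}{\sqrt{m_{2}}}$ then yields $\bar F_{V}(x)\ge \bar F_{(\chi^{2}_{m_{2}}-m_{2})/\sqrt{m_{2}}}(x)$ once $x$ is past a threshold.

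Finally I would reconcile the three thresholds. Each comparison holds beyond a quantile, and the definition of $\preceq_{\alpha}$ pins the relevant cutoff to the dominating variable; choosing $\alpha$ small enough (equivalently $n$ large enough through Theorem~\ref{thm1}) places all cutoffs inside a common upper-tail region where the strict rate separation $\tfrac12\sqrt{m_{1}}\le \tfrac{1}{2w_{1}}\le \tfrac12\sqrt{m_{2}}$ is decisive. The delicate points I anticipate are exactly in this last step: the polynomial prefactors actually favor the higher-degree equal-weight law at moderate $x$, which is what forces ``sufficiently small $\alpha$'' and makes the boundary case where $1/w_{1}^{2}$ is an integer (so $m_{1}=m_{2}$ and the two flanking laws collapse) degenerate and in need of separate, multiplicity-aware bookkeeping; and the prefactor of Theorem~\ref{thm3} together with the constant $c_{0}$ must be controlled \emph{uniformly} over all admissible weight sequences with the given $w_{1}$, since otherwise the admissible threshold, and hence the valid $\alpha$, could degenerate.
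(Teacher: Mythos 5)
The paper never actually writes out a proof of this corollary --- it is stated without proof, and the intended argument is the same decay-rate bookkeeping used in the proofs of Theorem~\ref{cor1} and Theorem~\ref{thm4}: Theorem~\ref{thm3} gives rate $1/(2w_1)$ for $V=\sum_i w_i U_i$, the proof of Theorem~\ref{thm4} gives rate $\sqrt{m}/2$ for $(\chi^2_m-m)/\sqrt{m}$, and $\sqrt{m_1}\le 1/w_1\le\sqrt{m_2}$ orders the three upper tails, the last link being Theorem~\ref{thm4} itself. Your three-link decomposition and rate comparisons match this plan, and on the left link you are in fact more careful than the paper: you correctly note that Theorem~\ref{thm3} is only an upper ($O$-)bound on the density of $V$, so it cannot by itself force the tail of $(\chi^2_{m_2}-m_2)/\sqrt{m_2}$ to sit \emph{below} the tail of $V$; your elementary lower bound $\Pr(V>x)\ge\Pr(R\ge 0)\Pr(w_1U_1>x)$ with $R=\sum_{i\ge 2}w_iU_i$ is exactly the missing ingredient, and combined with strict rate separation $1/(2w_1)<\sqrt{m_2}/2$ it closes that link in the non-integer case.

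The genuine weak point is your middle-upper link. Schur-convexity of $s\mapsto\Pr\bigl(\sum_i\sqrt{s_i}\,U_i>x\bigr)$ at every tail point $x$ is not a ``finite-$x$ refinement'' of anything in the paper; it is essentially the Diaconis--Perlman single-crossing property for weighted sums of gamma variables, a substantial result you would need to prove or cite, and your proposal simply asserts it. It is also unnecessary: the same rate comparison you use elsewhere suffices, since $f_V(x)=O(e^{-x/(2w_1)})$ by Theorem~\ref{thm3} while $(\chi^2_{m_1}-m_1)/\sqrt{m_1}$ has density $\asymp x^{m_1/2-1}e^{-\sqrt{m_1}x/2}$, and $1/w_1>\sqrt{m_1}$ whenever $1/w_1^2\notin\ZZ$. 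Finally, your instinct about the boundary case $1/w_1^2=m_1=m_2=m$ is right but should be stated more strongly: it is not a case needing ``separate bookkeeping,'' it is a counterexample to the corollary as written unless the weights are exactly equal. For instance, with $w=(1/\sqrt{2},1/2,1/2)$ the convolution gives $\Pr(V>x)\asymp x^{-1/2}e^{-\sqrt{2}x/2}$, while $(\chi^2_2-2)/\sqrt{2}$ has the pure exponential tail $e^{-(\sqrt{2}x+2)/2}$ at the same rate with no polynomial damping, so $(\chi^2_{m_2}-m_2)/\sqrt{m_2}\preceq_\alpha V$ fails for \emph{every} $\alpha$. The paper overlooks this case entirely; you at least flagged it, but your proposed repair (multiplicity-aware bookkeeping) cannot succeed, because the claim itself is false there.
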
 

Next, we offer two random variable examples that achieve the known densities. Theorem~\ref{thm5} shows that the limiting null distribution can equal the centered chi-square distribution. Theorem~\ref{thm6} establishes the normal distribution (which is also used in the t-test in \citet{SzekelyRizzo2013a}) with a simplified condition and a simplified proof, i.e., only one random variable is required to have infinite dimension.
\begin{theorem}
\label{thm5}
When $X$ and $Y$ are two independent binary random variables, $n \Dcor_{n}(\mathbf{X},\mathbf{Y}) \stackrel{D}{\rightarrow} U$.
\end{theorem}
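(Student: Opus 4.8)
The plan is to pin down the weights $\{w_{ij}\}$ of Theorem~\ref{thm1} explicitly in the binary case and show that all but one of them vanish. The representation in Theorem~\ref{thm1} arises from the spectral decomposition of the double-centered distance operators of $X$ and $Y$: the limiting null law of the unnormalized distance covariance is $\sum_{i,j}\lambda_i\mu_j(\mathcal{N}_{ij}^2-1)$, where $\{\lambda_i\}$ and $\{\mu_j\}$ are the eigenvalues of the centered distance operators of $X$ and $Y$ respectively, and normalizing to distance correlation replaces these by $w_{ij}=\lambda_i\mu_j/(\|\lambda\|\,\|\mu\|)$ with $\|\lambda\|^2=\sum_i\lambda_i^2$ and $\|\mu\|^2=\sum_j\mu_j^2$, so that automatically $\sum_{i,j}w_{ij}^2=1$. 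The theorem thus reduces to counting the nonzero eigenvalues of these operators.

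First I would compute the centered kernel for a binary variable. By the location and scale invariance of distance correlation, take $X\in\{0,1\}$ with $\Pr(X=1)=p$ and the Euclidean distance $d(x,x')=|x-x'|$. Evaluating the double-centered kernel $\tilde d(x,x')=d(x,x')-\mathbb{E}_{x''}d(x,x'')-\mathbb{E}_{x''}d(x',x'')+\mathbb{E}_{x'',x'''}d(x'',x''')$ at the four possible argument pairs yields the closed form $\tilde d(x,x')=-2(x-p)(x'-p)$; that is, the centered kernel is rank one, a scalar multiple of the outer product of $\psi(x)=x-p$ with itself. (More generally, since a two-point support carries a single off-diagonal distance value, $\tilde d$ stays rank one for any metric, which is why the conclusion is metric-free.)

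Next I would read off the spectrum and assemble the weights. Applying the operator $T_X f(x)=\mathbb{E}_{x'}[\tilde d(x,x')f(x')]$ to $\psi$ gives $T_X\psi=-2\,\mathrm{Var}(X)\,\psi=-2p(1-p)\psi$, so $X$ contributes a single nonzero eigenvalue $\lambda_1=-2p(1-p)$ and $\lambda_i=0$ for $i\geq2$; the identical argument for $Y\in\{0,1\}$ with $\Pr(Y=1)=q$ gives one nonzero eigenvalue $\mu_1=-2q(1-q)$. Hence the only surviving term in the double sum is $(i,j)=(1,1)$, with $w_{11}=\lambda_1\mu_1/(|\lambda_1|\,|\mu_1|)$. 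Both eigenvalues are negative, so $\lambda_1\mu_1>0$ and $w_{11}=1$, consistent with the constraint $w_{ij}\in[0,1]$ of Theorem~\ref{thm1}; the limit therefore collapses to $\mathcal{N}_{11}^2-1=\chi^2_1-1=U$, as claimed.

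The main obstacle I anticipate is not the binary computation, which is elementary, but the bookkeeping of the first paragraph: tying the abstract weights of Theorem~\ref{thm1} to the eigenvalues of the centered distance operators, and verifying that the centered kernel of a two-point distribution has rank exactly one so that no second eigenfunction contributes. This is cleanest through the population U-centered distance (equivalently HSIC) operator formulation, where the implication ``two support points $\Rightarrow$ rank-one centered kernel'' is transparent and the normalization $\sum_{ij}w_{ij}^2=1$ forces $|w_{11}|=1$.
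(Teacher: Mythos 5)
Your proof is correct, and its skeleton matches the paper's: both reduce Theorem~\ref{thm5} to the representation in Theorem~\ref{thm1}, show that for binary marginals only one eigenvalue product survives, and use the sign/normalization constraint to force $w_{11}=1$ rather than $-1$. Where you genuinely differ is in how the rank-one fact is established. The paper proves a sample-level lemma (Lemma~\ref{lem3}): if the data take at most $m$ distinct values, then $\mathbf{H}\mathbf{D}^{\mathbf{X}}\mathbf{H}$ has at most $m-1$ nonzero eigenvalues, argued via $\det(\mathbf{H})=0$ and $\mathrm{rank}(\mathbf{D}^{\mathbf{X}}\mathbf{H})=\mathrm{rank}(\mathbf{D}^{\mathbf{X}})-1$; specializing to $m=2$ gives exactly one nonzero eigenvalue for each matrix, and positivity of $\lambda_1\mu_1$ is inherited from the sign argument in the proof of Theorem~\ref{thm1} (a negative-type metric yields negative semidefinite centered matrices). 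You instead work at the population level and diagonalize the centered kernel explicitly, $\tilde d(x,x')=-2(x-p)(x'-p)$, exhibiting the operator as rank one with eigenvalue $-2p(1-p)$. Your route is more concrete: it produces the actual eigenvalue, verifies the sign directly rather than by appeal to negative-type theory, and makes the metric-independence transparent (any metric on a two-point support centers to a rank-one kernel). What the paper's route buys is generality and reuse: Lemma~\ref{lem3} covers arbitrary $m$ and is invoked again in the proof of Theorem~\ref{thm6}, whereas your computation is tailored to the binary case. Finally, the bookkeeping you flag as the main obstacle --- identifying the abstract weights $w_{ij}$ with the normalized eigenvalue products $\lambda_i\mu_j/(\sum_i\lambda_i^2\sum_j\mu_j^2)^{1/2}$ --- is exactly the content of the paper's proof of Theorem~\ref{thm1}, so no gap remains on that front; one only needs the standard fact that the limiting eigenvalues of the sample matrices $\mathbf{H}\mathbf{D}^{\mathbf{X}}\mathbf{H}/n$ coincide with those of your population operator, which is how Equation~\ref{eq1} is framed in the cited source.
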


\begin{theorem}
\label{thm6}
Assume $X$ is independent of $Y$, $X$ is continuous, and each dimension of $X$ is exchangeable with positive finite variance. As $n,p \rightarrow \infty$, it holds that
\begin{align*}
&n \Dcor_{n}(\mathbf{X},\mathbf{Y}) \stackrel{D}{\rightarrow} \mc{N}(0,2) \preceq_{0.0875} U.
\end{align*}
\end{theorem}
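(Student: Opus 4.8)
The plan is to split the claim into two independent assertions: the distributional convergence $n\Dcor_{n}(\mathbf{X},\mathbf{Y}) \stackrel{D}{\rightarrow} \mc{N}(0,2)$, and the upper-tail domination $\mc{N}(0,2) \preceq_{0.0875} U$. For the convergence I would begin from the general representation in Theorem~\ref{thm1}: for each fixed dimension $p$, letting $n \to \infty$ gives $n\Dcor_{n}(\mathbf{X},\mathbf{Y}) \stackrel{D}{\rightarrow} \sum_{i,j} w_{ij}^{(p)}(\mc{N}_{ij}^{2}-1)$ with $w_{ij}^{(p)} \in [0,1]$ and $\sum_{i,j}(w_{ij}^{(p)})^{2}=1$. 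Under independence the underlying covariance operator factorizes across $X$ and $Y$, so the weights split as $w_{ij}^{(p)} = \tilde\lambda_{i}^{(p)}\tilde\mu_{j}$, where $\{\tilde\lambda_{i}^{(p)}\}$ are the normalized eigenvalues of the double-centered $X$-kernel (with $\sum_{i}(\tilde\lambda_{i}^{(p)})^{2}=1$) and $\{\tilde\mu_{j}\}$ those of the $Y$-kernel. This factorization is exactly why only $X$ needs infinite dimension: since $\tilde\mu_{1}\le 1$, the largest weight obeys $\max_{i,j} w_{ij}^{(p)} = \tilde\lambda_{1}^{(p)}\tilde\mu_{1} \le \tilde\lambda_{1}^{(p)}$, so it suffices to control the $X$-side spectrum alone.

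The central step is then to show $\tilde\lambda_{1}^{(p)} \to 0$ as $p \to \infty$. Here I would use that, with $X$ continuous, coordinate-exchangeable, and of positive finite variance, the squared distance $\|x-x'\|^{2}$ concentrates as $p$ grows, so that after the standard square-root expansion and double centering the $X$-kernel is, to leading order, proportional to the centered linear kernel $\langle x-\mu,\, x'-\mu\rangle$. The normalized spectrum of that kernel is the normalized spectrum of $\mathrm{Cov}(X)$, and the exchangeability and finite-variance hypotheses are what drive this spectrum to spread out, forcing its top normalized eigenvalue to vanish; hence $\tilde\lambda_{1}^{(p)}\to 0$ and $\max_{i,j} w_{ij}^{(p)} \to 0$. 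With the largest weight vanishing and the total variance pinned at $2$ (each $\mc{N}_{ij}^{2}-1$ has mean $0$ and variance $2$, and $\sum_{i,j}(w_{ij}^{(p)})^{2}=1$), a Lyapunov central limit theorem applies: the Lyapunov ratio is bounded by $C(\tilde\lambda_{1}^{(p)})^{2}\to 0$, so the weighted sum of centered $\chi^{2}_{1}$ variables converges to $\mc{N}(0,2)$. Iterating the two limits (first $n\to\infty$, then $p\to\infty$) then yields the stated $n,p\to\infty$ convergence $n\Dcor_{n}(\mathbf{X},\mathbf{Y}) \stackrel{D}{\rightarrow} \mc{N}(0,2)$.

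For the domination $\mc{N}(0,2) \preceq_{0.0875} U$ I would argue directly from the definition, comparing the explicit cumulative distribution functions $F_{\mc{N}(0,2)}(x) = \Phi(x/\sqrt{2})$ and $F_{U}(x) = 2\Phi(\sqrt{x+1})-1$ for $x \ge -1$. Setting $g(x) = F_{\mc{N}(0,2)}(x) - F_{U}(x)$, the tail estimates $1-F_{\mc{N}(0,2)}(x) = O(e^{-x^{2}/4})$ and $1-F_{U}(x)=O(e^{-(x+1)/2})$ show $g(x) > 0$ for all large $x$; the remaining task is to rule out a sign change to the right of the relevant threshold. Locating the largest root $x^{*}$ of $g$ and setting $\alpha^{*} = 1 - F_{U}(x^{*})$ gives $x^{*} \approx 1.92$ and $\alpha^{*} \approx 0.0875$, and a monotonicity comparison of $\phi(x/\sqrt{2})/\sqrt{2}$ against $\phi(\sqrt{x+1})/\sqrt{x+1}$ in $g'(x)$ confirms $g(x)\ge 0$ for all $x \ge x^{*} = F_{U}^{-1}(1-\alpha^{*})$, which is precisely $\mc{N}(0,2)\preceq_{0.0875} U$.

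I expect the main obstacle to be the vanishing of the top normalized eigenvalue $\tilde\lambda_{1}^{(p)} \to 0$: this is where the distributional hypotheses (continuity, exchangeability, positive finite variance) must be converted, via concentration of high-dimensional distances and control of the covariance spectrum, into genuine spectral spreading. The Lyapunov CLT and the explicit CDF comparison are routine once that spectral estimate is in hand.
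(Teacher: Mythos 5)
Your overall skeleton matches the paper's: split the claim into the distributional limit and the tail-domination statement, get the limit from a CLT applied to the weighted sum of centered $\chi^2_1$ variables once the weights flatten in high dimension, and get the $0.0875$ level by directly comparing the two cumulative distribution functions. Your treatment of the second part is essentially the paper's own (the paper simply ``evaluates the cumulative distribution''; your crossing point $x^{*}\approx 1.92$ with $1-F_{U}(x^{*})=1-\bigl(2\Phi(\sqrt{x^{*}+1})-1\bigr)\approx 0.0875$ is exactly that computation, and it is correct). Your reduction via the factorization $w_{ij}=\tilde\lambda_i\tilde\mu_j$, so that $\max_{i,j}w_{ij}\le\tilde\lambda_1^{(p)}$ and only the $X$-side spectrum matters, together with a triangular-array Lyapunov/Lindeberg CLT, is actually cleaner than the paper's route: the paper shows all $n-1$ nonzero eigenvalues of $\mathbf{H}\mathbf{D}^{\mathbf{X}}\mathbf{H}/n$ become asymptotically equal and then invokes the equal-weight case (Theorem~\ref{thm4} with $m\to\infty$), which is slightly loose when the $Y$-side eigenvalues are unequal; your max-weight-vanishing condition handles that correctly.

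The genuine gap is in your central spectral step. The claim that ``exchangeability and finite variance drive the spectrum of $\mathrm{Cov}(X)$ to spread out, forcing $\tilde\lambda_1^{(p)}\to 0$'' is false as stated. Exchangeable coordinates have compound-symmetric covariance $\sigma^2\bigl[(1-\rho)I_p+\rho J_p\bigr]$ with common correlation $\rho\ge 0$; whenever $\rho>0$ the top eigenvalue $\sigma^2(1-\rho+p\rho)$ is a spike whose \emph{normalized} value tends to $1$, not $0$, so there is no spectral spreading. In that same regime your square-root linearization also breaks down, because $\|x_i-x_j\|^2/p$ converges to a nondegenerate random limit (depending on the shared latent factor of samples $i$ and $j$) rather than to a constant, so the distance kernel is not approximately linear around a single point. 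What actually drives the theorem --- and what the paper asserts directly --- is concentration of pairwise distances: $\mathbf{D}^{\mathbf{X}}$ converges (after scaling) to $a(J-I)$, hence $\mathbf{H}\mathbf{D}^{\mathbf{X}}\mathbf{H}$ becomes proportional to the centering matrix and its nonzero spectrum is flat, with no reference to $\mathrm{Cov}(X)$ at all. This concentration requires the coordinates to satisfy a law of large numbers with a \emph{deterministic} limit (i.i.d.-type coordinates); that is the reading under which the result holds, and your covariance-spectrum detour would have to assume it anyway, so the detour adds a false intermediate claim without gaining generality. The repair is to delete the $\mathrm{Cov}(X)$ argument, state the concentration $\tfrac{1}{p}\|x_i-x_j\|^2\to 2\sigma^2$ for all $i\neq j$ under the intended hypothesis, conclude that the normalized nonzero eigenvalues of the centered distance matrix are asymptotically equal (so $\tilde\lambda_1^{(p)}\to 0$ as $n,p\to\infty$), and then run your Lyapunov step unchanged.
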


As the distance correlation t-test essentially uses a t-transformation of $\mc{N}(0,2)$, we can view the t-test and the chi-square test as two-sides of the permutation test. In particular, the distance correlation t-test is an invalid test that slightly inflates the type $1$ error level, while the chi-square test is valid but always conservative in power:
\begin{corollary}
\label{cor0}
At any type 1 error level $\alpha \leq 0.05$, denote the testing power of distance correlation chi-square test, distance correlation t-test, and the permutation test as $\beta_{\alpha}^{\chi}, \beta_{\alpha}^{t}, \beta_{\alpha}$ respectively. At any $n$ and $\alpha$ such that Theorem~\ref{cor1} holds, there exists $\alpha_{1} \in (0,\alpha]$ and $\alpha_{2} \in (\alpha,0.0875]$ such that
\begin{align*}
\beta_{\alpha}^{\chi} = \beta_{\alpha_{1}} \leq \beta_{\alpha} < \beta_{\alpha_{2}} = \beta_{\alpha}^{t}.
\end{align*}
The actual $\alpha_{1}$ and $\alpha_{2}$ depend on the metric choice and marginal distributions. 
\end{corollary}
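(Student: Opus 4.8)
The plan is to treat all three procedures as threshold tests on the single statistic $T = n\Dcor_{n}(\mathbf{X},\mathbf{Y})$, differing only in the critical value. The chi-square test rejects when $T \ge c_\chi := F_{U}^{-1}(1-\alpha)$; the distance correlation t-test of \citet{SzekelyRizzo2013a}, being a strictly increasing transformation of $\Dcor_{n}$ compared against a Student reference that is asymptotically $\mc{N}(0,2)$ on the $T$-scale, rejects when $T \ge c_t := F_{\mc{N}(0,2)}^{-1}(1-\alpha)$; and the permutation test at nominal level $\alpha$ rejects when $T \ge c_* := F_{T}^{-1}(1-\alpha)$, where $F_{T}$ is the true (permutation) null CDF. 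The first thing I would record is a monotonicity observation: for any threshold test $\{T \ge c\}$ on a fixed statistic, both the actual size $1-F_{T}(c)$ and the power $\mathrm{Prob}_{H_A}(T \ge c)$ are non-increasing in $c$. Hence comparing the three powers reduces entirely to ordering the three critical values and re-reading each fixed-distribution test as a permutation calibration.

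Next I would invoke the sandwich of Theorem~\ref{cor1}, namely $\mc{N}(0,2) \preceq_{\alpha} n\Dcor_{n}(\mathbf{X},\mathbf{Y}) \preceq_{\alpha} U$. Evaluating the dominance inequalities at the relevant quantiles (for instance $F_{\mc{N}(0,2)}(c_*) \ge F_{T}(c_*) = 1-\alpha$ gives $c_* \ge c_t$, and symmetrically $c_\chi \ge c_*$) yields the ordering
\[
c_t \;\le\; c_* \;\le\; c_\chi ,
\]
and the monotonicity observation then gives $\beta_\alpha^{\chi} \le \beta_\alpha \le \beta_\alpha^{t}$ at once. To obtain the stated identities, I would read each fixed-distribution test as a permutation test at its own actual level: the chi-square rejection region $\{T \ge c_\chi\}$ coincides with the permutation rejection region at actual level $\alpha_1 := 1-F_{T}(c_\chi)$, so $\beta_\alpha^{\chi} = \beta_{\alpha_1}$, and since $c_\chi \ge c_*$ while $F_T$ has unbounded upper support we get $\alpha_1 \in (0,\alpha]$. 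Symmetrically $\{T \ge c_t\}$ matches the permutation test at actual level $\alpha_2 := 1-F_{T}(c_t)$, so $\beta_\alpha^{t} = \beta_{\alpha_2}$ with $\alpha_2 \ge \alpha$ because $c_t \le c_*$.

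The quantitative claim $\alpha_2 \le 0.0875$ is where I expect the real work, and I would handle it through the worst-case comparison already isolated in Theorem~\ref{thm6}. Among admissible null laws $T = \sum_i w_i(\mc{N}_i^2-1)$, the heaviest upper tail is the single-weight extreme $U$, so over the family the inflated level $1-F_{T}(c_t)$ is maximized by $1-F_{U}(c_t)$; evaluating this and using that $F_{\mc{N}(0,2)}$ and $F_{U}$ cross exactly at upper-tail probability $0.0875$ (the content of $\mc{N}(0,2)\preceq_{0.0875}U$ in Theorem~\ref{thm6}) caps the supremum of the inflation at $0.0875$ for every $\alpha \le 0.05$. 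The delicate point is that $c_t$ sits \emph{below} the far-tail region $\{x \ge F_{U}^{-1}(1-\alpha)\}$ where the clean $\preceq_\alpha$ dominance is stated, so I cannot quote it verbatim at $x=c_t$; instead I must use the full crossing structure of the two CDFs, together with the interpolation of the intermediate laws supplied by the Corollary following Theorem~\ref{thm4}, to certify that $U$ is genuinely the maximizer at the point $c_t$. Finally, for the strict middle inequality $\beta_\alpha < \beta_{\alpha_2}$, I would note that for every finite $n$ the null is a nondegenerate finite weighted sum of centered chi-squares and therefore strictly dominates $\mc{N}(0,2)$ in the relevant tail; this makes $c_t < c_*$ and $\alpha_2 > \alpha$ strict, and combined with strict monotonicity of the power in the threshold under a fixed alternative (the statistic being continuous) it upgrades the inequality to $\beta_\alpha < \beta_{\alpha_2} = \beta_\alpha^{t}$.
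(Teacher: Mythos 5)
Your structural reduction is, in essence, exactly the argument the paper intends: the paper states this corollary with no written proof at all (there is no corresponding subsection in the appendix), and its implicit justification is precisely the combination of the Theorem~\ref{cor1} sandwich with the recalibration idea you spell out. Your derivation of the threshold ordering $c_t \le c_* \le c_\chi$, the identities $\beta_\alpha^{\chi} = \beta_{\alpha_1}$ with $\alpha_1 = 1 - F_T(c_\chi) \in (0,\alpha]$ and $\beta_\alpha^{t} = \beta_{\alpha_2}$ with $\alpha_2 = 1 - F_T(c_t) > \alpha$, and the monotonicity of size and power in the threshold are correct and settle the qualitative part of the statement.

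The genuine gap is the cap $\alpha_2 \le 0.0875$, and your proposed route to it would fail because the claim you plan to certify is false: $U$ is \emph{not} the maximizer of the upper-tail probability at the point $c_t$ among admissible nulls. Take $\alpha = 0.05$, so $c_t = F^{-1}_{\mc{N}(0,2)}(0.95) \approx 2.33$. Then $P(U > 2.33) = P(\chi^2_1 > 3.33) \approx 0.068$, whereas the equal-weight law $W_2 = (\chi^2_2-2)/\sqrt{2}$ (the $m=2$ case of Theorem~\ref{thm4}) gives $P(W_2 > 2.33) = \exp\bigl(-(2 + 2.33\sqrt{2})/2\bigr) \approx 0.071 > 0.068$. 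This is exactly what the proof of Theorem~\ref{thm4} predicts: $F_{W_2}$ and $F_U$ cross at $x = 2.7 > c_t$, so below that crossing $W_2$ has the heavier tail; the interpolation corollary you invoke orders the laws only \emph{beyond} their crossing points and therefore works against you at $c_t$. Nor can the argument be repaired by taking the worst case over crossing levels with the normal instead: the crossing of $F_{W_2}$ with $F_{\mc{N}(0,2)}$ sits near tail probability $0.105 > 0.0875$, so ``the crossing level is maximized at $U$'' is also false. What the cap actually requires is the uniform pointwise bound $\sup_T P(T > c_t) \le 0.0875$ over all admissible weight sequences; this is numerically plausible (the supremum appears to be about $0.071$, attained near small equal-weight cases, consistent with the paper's own remark in the Discussion that the t-test inflates $0.05$ to at most about $0.07$), but it is proved neither in your proposal nor, for that matter, anywhere in the paper.
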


Finally, there is no other valid test of distance correlation that is as fast and as powerful as the chi-square test:
\begin{corollary}
\label{cor2}
At any $n$ and $\alpha$ such that Theorem~\ref{cor1} holds, the chi-square test is the most powerful test among all valid tests of distance correlation using known distributions. Namely, for any valid test $z$ of distance correlation using a fixed distribution, it always holds that $\beta_{\alpha}^{\chi} \geq \beta_{\alpha}^{z}$.
\end{corollary}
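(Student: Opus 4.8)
The plan is to reduce every ``test of distance correlation using a known distribution'' to a one-parameter threshold test on the scalar statistic $n\Dcor_{n}(\mathbf{X},\mathbf{Y})$, and then to identify the chi-square test as the valid test with the smallest admissible threshold. Any such test first forms $C=\Dcor_{n}(\mathbf{X},\mathbf{Y})$ and rejects when $nC$ exceeds a critical value $t_{z}=F_{G}^{-1}(1-\alpha)$ read off from some fixed distribution $G$ that does not depend on the unknown marginals. Thus the test is completely characterized by its threshold $t_{z}\geq 0$, its rejection region is $\{nC \geq t_{z}\}$, and its power against any fixed alternative is $\beta_{\alpha}^{z}=P_{H_{A}}(nC \geq t_{z})$. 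Since these rejection regions are nested---a smaller threshold gives a larger rejection region---the power $\beta_{\alpha}^{z}$ is monotonically non-increasing in $t_{z}$. Hence the most powerful valid test is exactly the valid test whose threshold is as small as possible, and the whole argument comes down to showing that the chi-square threshold is this smallest valid threshold.

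First I would record that the chi-square test is itself valid with threshold $t_{\chi}=F_{U}^{-1}(1-\alpha)$, where $U\sim\chi^{2}_{1}-1$. This is immediate from the upper-tail dominance $n\Dcor_{n}(\mathbf{X},\mathbf{Y})\preceq_{\alpha}U$ in Theorem~\ref{cor1}: evaluating the defining inequality $F_{n\Dcor_{n}}(x)\geq F_{U}(x)$ at $x=t_{\chi}$ gives $P_{H_{0}}(nC\geq t_{\chi})=1-F_{n\Dcor_{n}}(t_{\chi})\leq 1-F_{U}(t_{\chi})=\alpha$, and this bound holds regardless of the metric or the marginal distributions.

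The crux is to show that no valid test can use a threshold strictly below $t_{\chi}$. Here I would invoke Theorem~\ref{thm5}: when $X$ and $Y$ are independent binary variables, the limiting null distribution of $nC$ equals $U$ exactly. For this particular null scenario, any threshold $t_{z}<t_{\chi}$ would produce asymptotic type 1 error $1-F_{U}(t_{z})>1-F_{U}(t_{\chi})=\alpha$, using that $F_{U}$ is strictly increasing near its $(1-\alpha)$-quantile; such a test fails to control the level and is therefore not valid. Because validity is required to hold uniformly over all admissible null distributions, every valid test must satisfy $t_{z}\geq t_{\chi}$. Combining this with the monotonicity of power in the threshold yields $\beta_{\alpha}^{z}=P_{H_{A}}(nC\geq t_{z})\leq P_{H_{A}}(nC\geq t_{\chi})=\beta_{\alpha}^{\chi}$, which is the claim.

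The main obstacle I anticipate is making precise the class of admissible tests together with the uniformity of the validity requirement, since the argument hinges on validity being demanded across \emph{all} metric choices and marginal distributions, so that the extremal binary configuration of Theorem~\ref{thm5} becomes the binding constraint pinning down $t_{\chi}$ as the smallest achievable valid threshold. A secondary technical point is justifying the strict inequality at $t_{z}<t_{\chi}$, which needs $F_{U}$ to be strictly increasing (hence free of atoms) in a neighborhood of the $(1-\alpha)$-quantile---true for the centered chi-square at the relevant levels $\alpha\leq 0.05$---so that lowering the threshold strictly inflates the level in the binary case rather than leaving it unchanged.
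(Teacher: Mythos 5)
Your proposal is correct and is essentially the paper's own argument in threshold language: the paper likewise invokes Theorem~\ref{thm5} (the binary case, where the limiting null is exactly $U$) to force $U \preceq_{\alpha} Z$ for any valid fixed-distribution test, combines this with $n\Dcor_{n}(\mathbf{X},\mathbf{Y}) \preceq_{\alpha} U$ from Theorem~\ref{cor1}, and concludes $\beta_{\alpha}^{z} \leq \beta_{\alpha}^{\chi}$ by the resulting ordering of rejection regions. Your added care about strict monotonicity of $F_{U}$ near the quantile is a harmless refinement, not a departure.
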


\section{Numerical Experiments}
\label{sec:simu}

This section evaluates the numerical advantages of the chi-square test. First, we show that the centered chi-square distribution approximates the true null distribution of bias-corrected distance correlation in simulation. Second, the chi-square test is compared to existing tests on 1D data. Then the comparison is carried out on multivariate and high-dimensional data. And finally, we use a real data application to demonstrate the practical usage and superior performance of the chi-square test. Simulation function details are in Appendix~\ref{app_sim}.

\subsection{Null Distribution Approximation}
\label{simu0}

The top row of Figure~\ref{fig1} visualizes the centered chi-square distribution $U$ and the normal distribution $N(0,2)$ (divide each by $n$), and compare them with the actual null distribution of $\Dcor_{n}(\mathbf{X},\mathbf{Y})$ in varying dimensions. The centered chi-square distribution and the normal distribution are plotted by a solid line and an dashed line, respectively. We set sample size at $n=100$, generate independent $\mathbf{X}$ and $\mathbf{Y}$ for $r=10$,$000$ replicates and different $p,q$, and plot the null distribution in dotted line. The left panel shows the null distributions at $p=q=1$, the center panel is for $p=q=10$, and the right panel is for $p=q=100$. As expected from the theorems, the upper tail of the null distribution for $\alpha \leq 0.05$ (equivalently y-axis greater than $0.95$) always lies between and gradually shifts from $U$ to $N(0,2)$ as dimension increases.

The bottom row of Figure~\ref{fig1} shows the weights used in the corresponding limiting null distribution, associated with the ordered eigenvalue list $\{\lambda_i, i=1,\ldots,10\}$ normalized by $-(\sum\limits_{i=1}^{n}\lambda_i^2)^{0.5}$ (see proof of Theorem~\ref{thm1}). When $p=1$, the leading weight plays a dominating role such that $n \Dcor_{n}(\mathbf{X},\mathbf{Y}) \stackrel{D}{\approx} U$. As $p,q$ increase, all weights becoming similar such that $n \Dcor_{n}(\mathbf{X},\mathbf{Y}) \stackrel{D}{\rightarrow} N(0,2)$. Therefore, the most conservative the chi-square test can be is to approximate a normal distribution by a chi-square distribution of same mean and variance.

\begin{figure}
\includegraphics[width=1.0\textwidth,trim={1cm 0.5cm 1cm 0.5cm},clip]{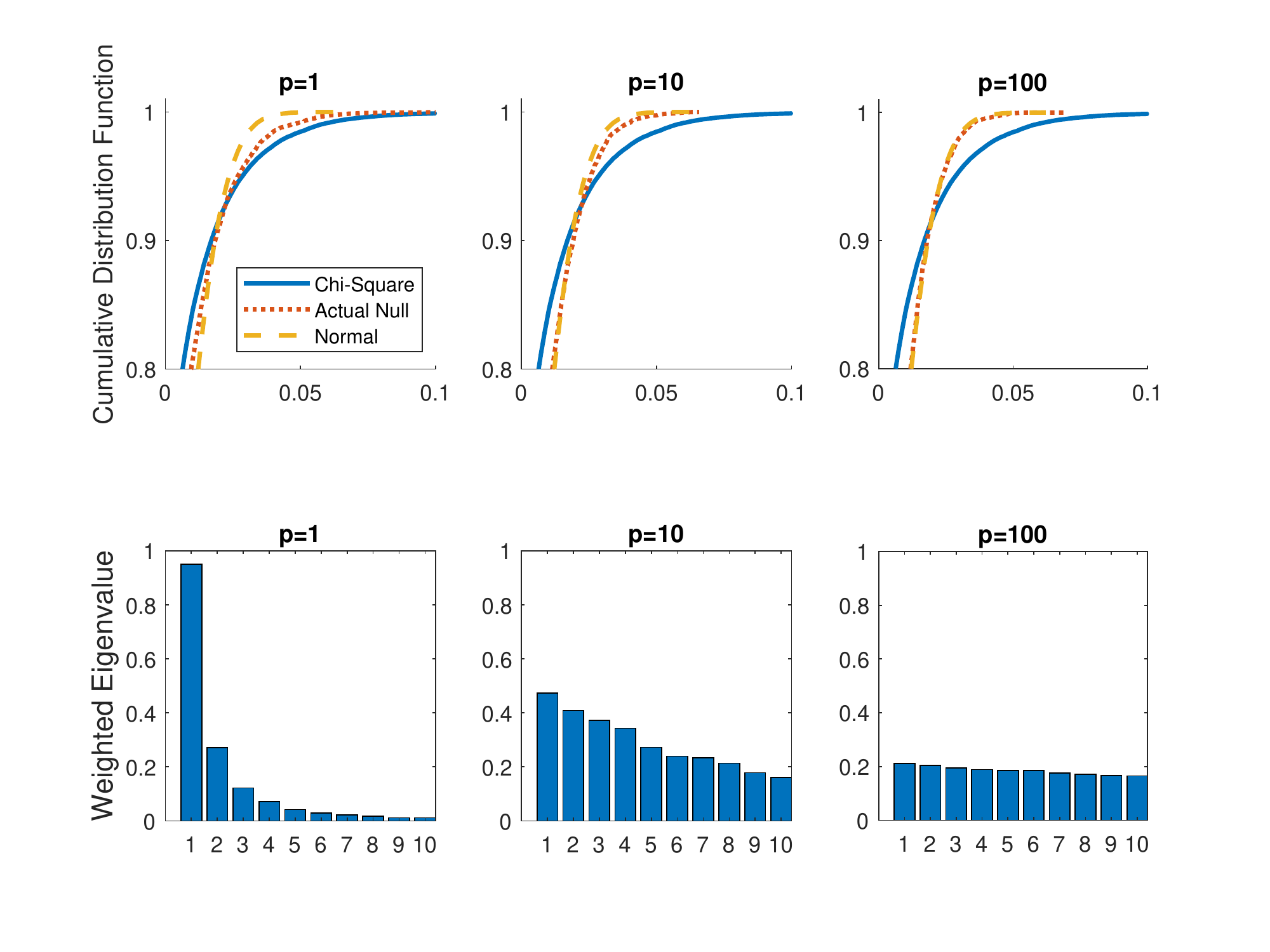}
\caption{The top row compares the centered chi-square distribution, the normal distribution, and the actual null distribution of distance correlation in case of varying dimensions. The bottom row shows the weights used in the limiting null distribution in each case.} 
\label{fig1}
\end{figure} 

\subsection{One-Dimensional Random Variables}
\label{simu1}

Here we evaluate distance correlation chi-square test, permutation test, distance correlation t-test, HSIC Gamma test (only applicable to biased HSIC), and the subsampling method for four different 1D simulations: linear, quadratic, spiral, and independent. We extensively evaluated the tests on many different dependency types and decided to illustrate four representative simulations only, as the phenomenon is qualitatively similar throughout. All simulations are one-dimensional in this section, that is, $p=q=1$. In each simulation, we sample $n=20,40,\ldots, 200$ points, generate sample data $1$,$000$ times, run each test and reject at $\alpha=0.05$ level, and compute how often the test is correctly rejected (the testing power), which is then plotted against the sample size.

The top row of Figure~\ref{fig2} shows the power of distance correlation under Euclidean distance, the middle row shows the power of Hilbert-Schmidt independence criterion (equivalently the distance correlation using Gaussian kernel), and the bottom row shows the running time for the top row in log scale. The performance is the same throughout all dependency types and sample size and metric choice: the permutation test is the benchmark for testing power, but significantly slower than the fast alternatives; the distance correlation chi-square test has almost the same testing power as the benchmark permutation test, and the fastest method; the distance correlation t-test is equally fast, but consistently inflates the power in most cases, e.g., it has a power of $0.07$ for independence vs about $0.05$ of the permutation test; the subsampling method always yields degraded power. 

Note that the HSIC Gamma test requires some parameter estimation using mean, variance, and kernel bandwidth, thus only applicable to the middle row for the biased HSIC. Its testing power is almost the same as the chi-square and permutation tests. This does not mean the chi-square test approximates the Gamma test in distribution. Rather, it implies that the Gamma test approximates the null distribution of biased HSIC well; the chi-square test approximates the bias-corrected null distribution well; then the permutation test has almost the same testing power with or without bias-correction.

\begin{figure}
\includegraphics[width=1.0\textwidth,trim={2cm 0cm 2cm 0cm},clip]{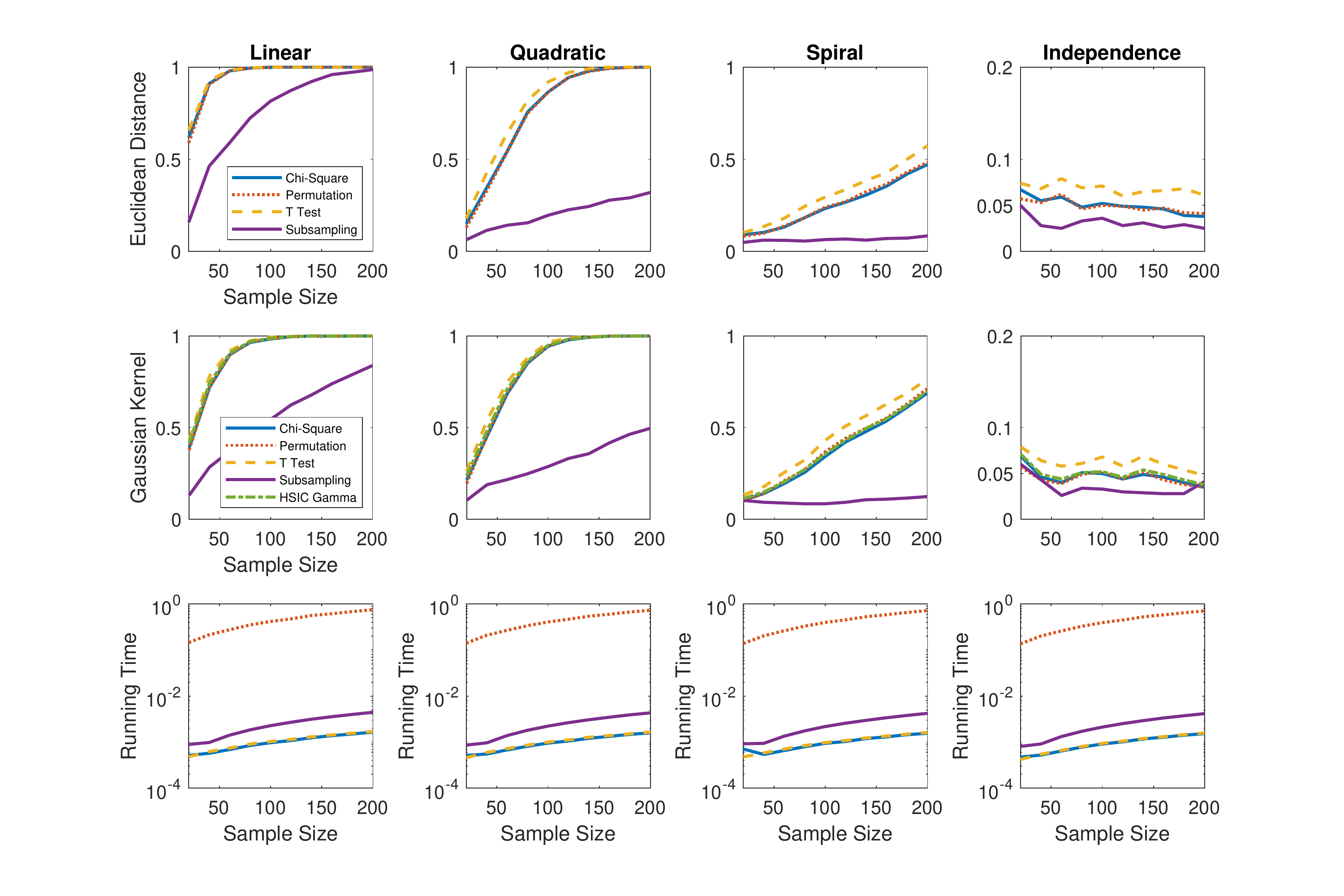}
\caption{Evaluate distance correlation using different tests for linear, quadratic, spiral, and independent simulations. The top row shows the power using the Euclidean distance, the center row shows the power using the Gaussian kernel, and the bottom row shows the running time (in log scale) for each method in the top row.}
\label{fig2}
\end{figure} 

\subsection{Increasing-Dimensional Random Variables}
\label{simu2}
We consider four multivariate settings: equal variance (each dimension is exchangeable with same variance), minimal variance (first few dimensions has same variance while remaining dimensions have very small variance), dependent coordinates (consecutive dimensions are dependent), and varying marginals (the marginal distribution of each dimension is different). We fix the sample size and $q=1$, increase $p$ accordingly in each simulation, and compute the testing power at $\alpha=0.05$ based on $1000$ Monte-Carlo replicates.

The testing power and the running time are plotted against dimension in Figure~\ref{fig3}, offering almost the same interpretation as Figure~\ref{fig2}. In particular, the equal variance simulation is the only setting here satisfying the assumption of Theorem~\ref{thm6}, in which case the t-test only minimally inflate the testing power and the chi-square test exhibits a slightly more conservative testing power vs the permutation test. In the other three high-dimensional settings, the dimensions are no longer exchangeable, and the chi-square test has almost the same power as the permutation test. In terms of running time, the chi-square test and the t-test are the best, which do not increase much as dimension increases. 

Finally, we evaluate the testing power for the multivariate simulation in Figure~\ref{fig1}, and present the results in Figure~\ref{fig4}. Regardless of $p,q,n$, the chi-square test is always similar in power as the permutation test and slightly conservative.

\begin{figure}
\centering
\includegraphics[width=1.0\textwidth,trim={2cm 0cm 2cm 0cm},clip]{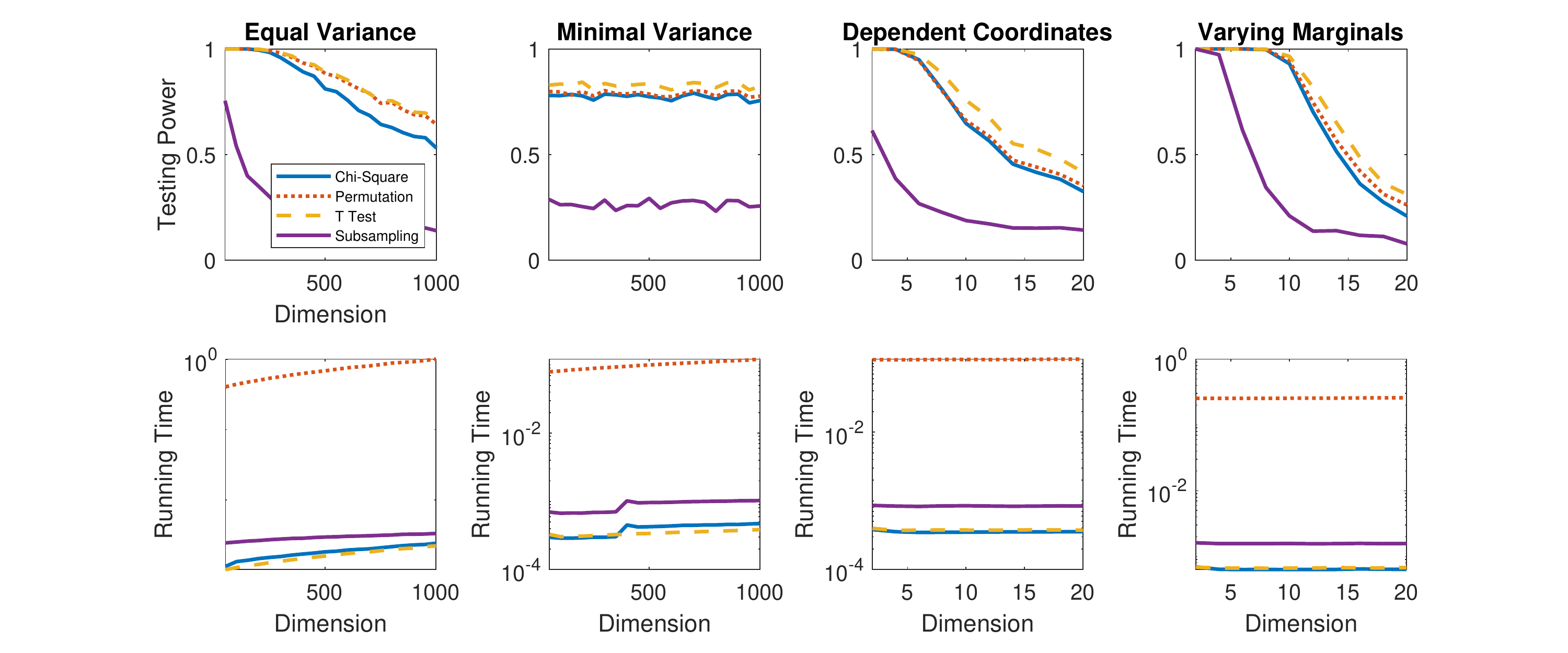}
\caption{Evaluate distance correlation using different tests for four increasing-dimensional simulations using Euclidean distance. The first row shows the testing power in each simulation, and the second row shows the running time in log scale for each method in the respective first row.}
\label{fig3}
\end{figure}

\begin{figure}
\centering
\includegraphics[width=0.7\textwidth,trim={1cm 0.5cm 1cm 0.5cm},clip]{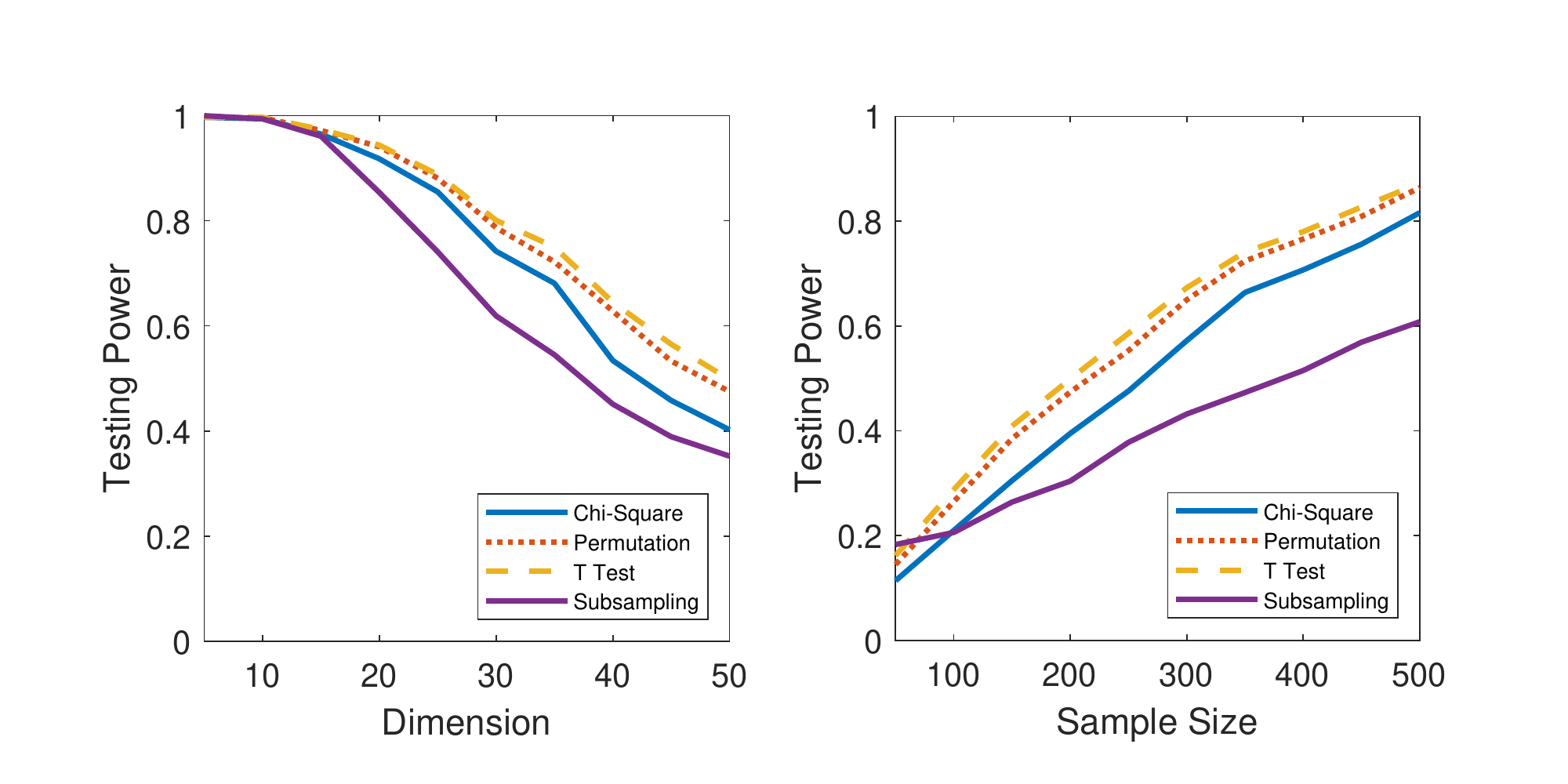}
\caption{The Testing Power for the simulation in Figure~\ref{fig1}. The left panel fix $n=200$, and let $p,q$ increases; the right panel set $p=q=50$, and let $n$ increases. } 
\label{fig4}
\end{figure} 

\subsection{Real Data Application}

Here we apply the tests to do feature selection on a proteomics data \citep{mgc1,PMID21248225}. The data consist of $318$ proteolytic peptides measurements, derived from the blood samples of $98$ individuals harboring pancreatic ($n=10$), ovarian ($n=24$), colorectal cancer ($n=28$), and healthy controls  ($n=33$). We would like to identify potential biomarkers for the pancreatic cancer, because it is lethal and no clinically useful biomarkers are currently available \citep{Bhat2012}.

For each biomarker, we first apply the distance correlation permutation test (using $500$ replicates) between the blood measurement of each peptide and the class label ($1$ for pancreatic cancer, $0$ otherwise). This yields $318$ p-values, and we take all peptides with p-value less than $0.05$ as the positive ones (in total 19 of them), and the remaining 299 peptides as the negative ones. Then we apply the chi-square test to each peptide vs the class label, and compute the true positive (i.e., when a peptide has p-value $<0.05$ from the chi-square test, whether the peptide also has p-value $<0.05$ from the permutation test) and true negative. Repeat for the t-test and subsampling method. 

The result is summarized in Table~\ref{table1}. The chi-square test is fast and almost perfect for both true positive and true negative. The t-test is as fast, but inflates one of two errors. The subsampling is actually non-performing (all peptides have p-value $>0.1$ so every peptide is a negative) --- this is because due to the imbalanced label vector, most subsampled labels are a constant $0$, so the subsampled statistics end up being $0$ most of the time.

\begin{table*}[!ht]
\centering
\caption{Report the running time (in seconds), true positive and true negative for each method. Permutation test is the benchmark for determining the positive and negative peptides. Among all the approaches, the chi-square test is the fastest with almost perfect result. }
\label{table1}%
\begin{tabular}{l|l l l }
\toprule
Method & Running Time & True Positive & True Negative \\
\midrule
Permutation Test 	& 149.8  & 100\% & 100\% \\
\midrule
Chi-Square Test 	& \textbf{0.32}  & \textbf{92\%} & \textbf{98\%} \\
T Test   & 0.34 & \textbf{92\%} & 68\% \\
Subsampling & 0.48 & NaN & 94\% \\
\bottomrule
\end{tabular}
\end{table*}

\section{Discussion}
\label{sec:dis}
This paper proposes a new chi-square test for testing independence. It is very computationally efficient and scalable to big data, valid and universally consistent, applicable to general metric and kernel choices and other tasks like K-sample and partial, achieves similar power as the permutation test, and compares very favorably against all existing tests in simulations and real data. We expect the chi-square test to be widely used in practice due to its computational and performance advantages, and we plan to further investigate the stochastic dominance in theory and its potential applications in other inference tasks.

In particular, by Corollary~\ref{cor0} the chi-square test is always more conservative than the permutation test. But the power loss is negligible in 1D data and minimal in high-dimensional data as shown in Figure~\ref{fig2} and Figure~\ref{fig3}. In comparison, the t-test always inflates the testing power. As shown in Figure~\ref{fig1}, at the type 1 error level $0.05$ the t-test inflates the error to $(0.05,0.07]$, while the chi-square test has a conservative error within $[0.02,0.05]$. Viewed in another way, the t-test at $\alpha=0.05$ is equivalent in power to a permutation test at $\alpha \in (0.05,0.07]$, while the chi-square test at $\alpha=0.05$ is equivalent in power to a permutation test at $\alpha \in [0.02,0.05]$. Inflating the type 1 error slightly may not be a big issue in a single test, but can cause larger deviation in multiple comparison, as evidenced tin the real data application; whereas the chi-square test is not susceptible to this issue.

\section*{Acknowledgment}
This work was supported by the National Science Foundation award DMS-1921310, the National Institute of Health award R01MH120482, and DARPA L2M program FA8650-18-2-7834.
The authors thank Dr. Austin Brockmeier, Mr. Ronak Mehta, Dr. Yuexiao Dong, Dr. Carey Priebe for helpful discussions and suggestions. We also thank the editors and reviewers for the valuable suggestions that greatly improved the paper in organization, expositions, and experiments.

\bibliographystyle{apalike}
\bibliography{mgc}

\begin{thebibliography}{}

\bibitem[Balasubramanian et~al., 2013]{bal2013}
Balasubramanian, K., Sriperumbudur, B., and Lebanon, G. (2013).
\newblock Ultrahigh dimensional feature screening via rkhs embeddings.
\newblock In {\em Proceedings of Machine Learning Research}, pages 126--134.

\bibitem[Bhat et~al., 2012]{Bhat2012}
Bhat, K., Wang, F., Ma, Q., Li, Q., Mallik, S., chen Hsieh, T., and Wu, E.
  (2012).
\newblock Advances in biomarker research for pancreatic cancer.
\newblock {\em Current Pharmaceutical Design}, 18(17):2439--2451.

\bibitem[Chaudhuri and Hu, 2019]{Hu2018}
Chaudhuri, A. and Hu, W. (2019).
\newblock A fast algorithm for computing distance correlation.
\newblock {\em Computational Statistics \& Data Analysis}, 135:15--24.

\bibitem[Fokianos and Pitsillou, 2018]{Pitsillou2018}
Fokianos, K. and Pitsillou, M. (2018).
\newblock Testing independence for multivariate time series via the
  auto-distance correlation matrix.
\newblock {\em Biometrika}, 105(2):337--352.

\bibitem[Fukumizu et~al., 2007]{gretton2007}
Fukumizu, K., Gretton, A., Sun, X., and Schölkopf, B. (2007).
\newblock Kernel measures of conditional dependence.
\newblock In {\em Advances in neural information processing systems}.

\bibitem[Gretton and Gyorfi, 2010]{GrettonGyorfi2010}
Gretton, A. and Gyorfi, L. (2010).
\newblock Consistent nonparametric tests of independence.
\newblock {\em Journal of Machine Learning Research}, 11:1391--1423.

\bibitem[Gretton et~al., 2005]{GrettonEtAl2005}
Gretton, A., Herbrich, R., Smola, A., Bousquet, O., and Scholkopf, B. (2005).
\newblock Kernel methods for measuring independence.
\newblock {\em Journal of Machine Learning Research}, 6:2075--2129.

\bibitem[Heller et~al., 2013]{HellerGorfine2013}
Heller, R., Heller, Y., and Gorfine, M. (2013).
\newblock A consistent multivariate test of association based on ranks of
  distances.
\newblock {\em Biometrika}, 100(2):503--510.

\bibitem[Heller et~al., 2016]{heller2016consistent}
Heller, R., Heller, Y., Kaufman, S., Brill, B., and Gorfine, M. (2016).
\newblock Consistent distribution-free $k$-sample and independence tests for
  univariate random variables.
\newblock {\em Journal of Machine Learning Research}, 17(29):1--54.

\bibitem[Huo and Szekely, 2016]{Huo2016}
Huo, X. and Szekely, G. (2016).
\newblock Fast computing for distance covariance.
\newblock {\em Technometrics}, 58(4):435--447.

\bibitem[Li et~al., 2012]{LiZhongZhu2012}
Li, R., Zhong, W., and Zhu, L. (2012).
\newblock Feature screening via distance correlation learning.
\newblock {\em Journal of American Statistical Association}, 107:1129--1139.

\bibitem[Lyons, 2013]{Lyons2013}
Lyons, R. (2013).
\newblock Distance covariance in metric spaces.
\newblock {\em Annals of Probability}, 41(5):3284--3305.

\bibitem[Mehta et~al., 2019]{mgc5}
Mehta, R., Shen, C., Ting, X., and Vogelstein, J.~T. (2019).
\newblock Consistent and powerful independence testing for time series.
\newblock {\em https://arxiv.org/abs/1908.06486}.

\bibitem[Panda et~al., 2021]{exact1}
Panda, S., Shen, C., Priebe, C.~E., and Vogelstein, J.~T. (2021).
\newblock Multivariate multisample multiway nonparametric manova.
\newblock {\em https://arxiv.org/abs/1910.08883}.

\bibitem[Pearson, 1895]{Pearson1895}
Pearson, K. (1895).
\newblock Notes on regression and inheritance in the case of two parents.
\newblock {\em Proceedings of the Royal Society of London}, 58:240--242.

\bibitem[Rizzo and Szekely, 2010]{Rizzo2010}
Rizzo, M. and Szekely, G. (2010).
\newblock {DISCO analysis: A nonparametric extension of analysis of variance}.
\newblock {\em Annals of Applied Statistics}, 4(2):1034--1055.

\bibitem[Rizzo and Szekely, 2016]{RizzoSzekely2016}
Rizzo, M. and Szekely, G. (2016).
\newblock Energy distance.
\newblock {\em Wiley Interdisciplinary Reviews: Computational Statistics},
  8(1):27--38.

\bibitem[Sejdinovic et~al., 2013]{SejdinovicEtAl2013}
Sejdinovic, D., Sriperumbudur, B., Gretton, A., and Fukumizu, K. (2013).
\newblock Equivalence of distance-based and rkhs-based statistics in hypothesis
  testing.
\newblock {\em Annals of Statistics}, 41(5):2263--2291.

\bibitem[Shen et~al., 2020]{mgc2}
Shen, C., Priebe, C.~E., and Vogelstein, J.~T. (2020).
\newblock From distance correlation to multiscale graph correlation.
\newblock {\em Journal of the American Statistical Association},
  115(529):280--291.

\bibitem[Shen and Vogelstein, 2021]{exact2}
Shen, C. and Vogelstein, J.~T. (2021).
\newblock The exact equivalence of distance and kernel methods in hypothesis
  testing.
\newblock {\em AStA Advances in Statistical Analysis}.

\bibitem[Szekely and Rizzo, 2005]{Szekely2005}
Szekely, G. and Rizzo, M. (2005).
\newblock Hierarchical clustering via joint between-within distances: Extending
  ward's minimum variance method.
\newblock {\em Journal of Classification}, 22:151--183.

\bibitem[Szekely and Rizzo, 2009]{SzekelyRizzo2009}
Szekely, G. and Rizzo, M. (2009).
\newblock Brownian distance covariance.
\newblock {\em Annals of Applied Statistics}, 3(4):1233--1303.

\bibitem[Szekely and Rizzo, 2013]{SzekelyRizzo2013a}
Szekely, G. and Rizzo, M. (2013).
\newblock The distance correlation t-test of independence in high dimension.
\newblock {\em Journal of Multivariate Analysis}, 117:193--213.

\bibitem[Szekely and Rizzo, 2014]{SzekelyRizzo2014}
Szekely, G. and Rizzo, M. (2014).
\newblock Partial distance correlation with methods for dissimilarities.
\newblock {\em Annals of Statistics}, 42(6):2382--2412.

\bibitem[Szekely et~al., 2007]{SzekelyRizzoBakirov2007}
Szekely, G., Rizzo, M., and Bakirov, N. (2007).
\newblock Measuring and testing independence by correlation of distances.
\newblock {\em Annals of Statistics}, 35(6):2769--2794.

\bibitem[Vogelstein et~al., 2019]{mgc1}
Vogelstein, J.~T., Wang, Q., Bridgeford, E., Priebe, C.~E., Maggioni, M., and
  Shen, C. (2019).
\newblock Discovering and deciphering relationships across disparate data
  modalities.
\newblock {\em eLife}, 8:e41690.

\bibitem[Wang et~al., 2011]{PMID21248225}
Wang, Q., Chaerkady, R., Wu, J., Hwang, H.~J., Papadopoulos, N., Kopelovich,
  L., Maitra, A., Matthaei, H., Eshleman, J.~R., Hruban, R.~H., Kinzler, K.~W.,
  Pandey, A., and Vogelstein, B. (2011).
\newblock Mutant proteins as cancer-specific biomarkers.
\newblock {\em Proceedings of the National Academy of Sciences of the United
  States of America}, (6):2444--9.

\bibitem[Wang et~al., 2019]{mgc4}
Wang, S., Shen, C., Badea, A., Priebe, C.~E., and Vogelstein, J.~T. (2019).
\newblock Signal subgraph estimation via iterative vertex screening.
\newblock {\em https://arxiv.org/abs/1801.07683}.

\bibitem[Wang et~al., 2015]{Wang2015}
Wang, X., Pan, W., Hu, W., Tian, Y., and Zhang, H. (2015).
\newblock {Conditional Distance Correlation}.
\newblock {\em Journal of the American Statistical Association},
  110(512):1726--1734.

\bibitem[Xiong et~al., 2020]{mgc6}
Xiong, J., Arroyo, J., Shen, C., and Vogelstein, J.~T. (2020).
\newblock Graph independence testing: Applications in multi-connectomics.
\newblock {\em https://arxiv.org/abs/1906.03661}.

\bibitem[Zhang et~al., 2018]{zhang2018}
Zhang, Q., Filippi, S., Gretton, A., and Sejdinovic, D. (2018).
\newblock Large-scale kernel methods for independence testing.
\newblock {\em Statistics and Computing}, 28(1):113--130.

\bibitem[Zhong and Zhu, 2015]{Zhong2015}
Zhong, W. and Zhu, L. (2015).
\newblock An iterative approach to distance correlation-based sure independence
  screening.
\newblock {\em Journal of Statistical Computation and Simulation},
  85(11):2331--2345.

\bibitem[Zhou, 2012]{Zhou2012}
Zhou, Z. (2012).
\newblock Measuring nonlinear dependence in time‐series, a distance
  correlation approach.
\newblock {\em Journal of Time Series Analysis}, 33(3):438--457.

\end{thebibliography}

\clearpage
\appendix

\setcounter{figure}{0}
\setcounter{algorithm}{0}
\renewcommand{\thealgorithm}{A\arabic{algorithm}}
\renewcommand{\thefigure}{E\arabic{figure}}
\renewcommand{\thesubsection}{\thesection.\arabic{subsection}}
\renewcommand{\thesubsubsection}{\thesubsection.\arabic{subsubsection}}
\pagenumbering{arabic}
\renewcommand{\thepage}{\arabic{page}}

\bigskip
\begin{center}
{\large\bf APPENDIX}
\end{center}

\section{Background} 
\label{sec:bg}

This section contains necessary background for the proofs.

\subsection{Biased and Bias-corrected Sample Distance Correlation}

Let the paired sample data, which is assumed independently and identically distributed as $F_{XY}$, be denoted by 
\begin{align*}
(\mathbf{X},\mathbf{Y}) = \{(x_{i},y_{i}) \in \mathbb{R}^{p+q}, i=1,\ldots,n\}.
\end{align*}
Given a distance metric $d(\cdot,\cdot)$ such as the Euclidean metric, let $\mathbf{D}^{\mathbf{X}}$ denote the $n \times n$ distance matrix of $\mathbf{X}$ with $\mathbf{D}^{\mathbf{X}}_{ij}=d(x_i,x_j)$, $\mathbf{D}^{\mathbf{Y}}$ denote the distance matrix of $\mathbf{Y}$, and $\mathbf{H}=\mathbf{I}-\frac{1}{n}\mathbf{J}$ denote the $n \times n$ centering matrix where $\mathbf{I}$ is the identity matrix and $\mathbf{J}$ is the matrix of ones. The biased sample distance correlation was proposed in \citet{SzekelyRizzoBakirov2007} with an elegant matrix formulation:
\begin{align*}
&\Dcov_{n}^{b}(\mathbf{X},\mathbf{Y})= \frac{1}{n^2}\trace{\mathbf{H}\mathbf{D}^{\mathbf{X}}\mathbf{H}\mathbf{H}\mathbf{D}^{\mathbf{Y}}\mathbf{H}}, \\
&\Dcor_{n}^{b}(\mathbf{X},\mathbf{Y})= \frac{\Dcov_{n}^{b}(\mathbf{X},\mathbf{Y})}{\sqrt{\Dcov_{n}^{b}(\mathbf{X},\mathbf{X})\Dcov_{n}^{b}(\mathbf{Y},\mathbf{Y})}} \in [0,1],
\end{align*}
where $\Dcov_{n}^{b}$ denotes the biased sample distance covariance and $\Dcor_{n}^{b}$ denotes the biased sample distance correlation. The bias-corrected version was later introduced via the following bias correction \citep{SzekelyRizzo2014}: compute a modified matrix $\mathbf{C}^{\mathbf{X}}$ as
\begin{align*}
\mathbf{C}^{\mathbf{X}}_{ij}=
 \begin{cases}
 \mathbf{D}^{\mathbf{X}}_{ij}-\frac{1}{n-2}\sum\limits_{t=1}^{n} \mathbf{D}^{\mathbf{X}}_{it}-\frac{1}{n-2}\sum\limits_{s=1}^{n} \mathbf{D}^{\mathbf{X}}_{sj}+\frac{1}{(n-1)(n-2)}\sum\limits_{s,t=1}^{n}\mathbf{D}^{\mathbf{X}}_{st}, \ i \neq j \\
 0, \mbox{ otherwise},
 \end{cases}
\end{align*}
and similarly compute $\mathbf{C}^{\mathbf{Y}}$ from $\mathbf{D}^{\mathbf{Y}}$. The bias-corrected sample distance covariance and correlation are
\begin{align*}
& \Dcov_{n}(\mathbf{X}, \mathbf{Y}) = \frac{1}{n(n-3)}\trace{\mathbf{C}^{\mathbf{X}}\mathbf{C}^{\mathbf{Y}}},\\
&\Dcor_{n}(\mathbf{X},\mathbf{Y})= \frac{\Dcov_{n}(\mathbf{X},\mathbf{Y})}{\sqrt{\Dcov_{n}(\mathbf{X},\mathbf{X})\Dcov_{n}(\mathbf{Y},\mathbf{Y})}} \in [-1,1].
\end{align*}
Namely, $\mathbf{C}^{\mathbf{X}}$ always sets the diagonals to $0$ and slightly modifies the off-diagonal entries from $\mathbf{H}\mathbf{D}^{\mathbf{X}}\mathbf{H}$. If $n<4$ or the denominator term is not a positive real number, the bias-corrected sample distance correlation is set to $0$. 

As long as the metric $d(\cdot,\cdot)$ is of strong negative type such as the Euclidean metric \citep{Lyons2013}, distance correlation satisfies the following:
\begin{align*}
\Dcor_{n}(\mathbf{X},\mathbf{Y}) \stackrel{n \rightarrow \infty}{\rightarrow} 0 \mbox{ if and only if independence, }
\end{align*}
which guarantees a universally consistent statistic for testing independence. Moreover, it is unbiased in the following sense:
\begin{align*}
\EE(\Dcor_{n}(\mathbf{X},\mathbf{Y})) = 0 \mbox{ when $X$ is independent of $Y$},
\end{align*}
which is not satisfied by the biased version.

Instead of using a strong negative type distance metric, one may use a characteristic kernel for $d(\cdot,\cdot)$, i.e., $\mathbf{D}^{\mathbf{X}}$ and $\mathbf{D}^{\mathbf{Y}}$ become two kernel matrices, and all above formulation still hold. When one uses the Gaussian kernel with median distance as the bandwidth, the resulting correlation becomes the Hilbert-Schmidt independence criterion. As the theorems hold for any strong negative type metric or any characteristic kernel, we shall consistently use the distance correlation naming regardless of whether a metric or kernel is used for $d(\cdot,\cdot)$. 

\subsection{Null Distribution of Distance Correlation}

The goal for fast testing is to approximate the null distribution of distance correlation via a known distribution. Then given any sample test statistic, one can immediately compute the p-value, and reject the independence hypothesis when it is smaller than a pre-specified critical level $\alpha$. 

From \citet{zhang2018}, the limiting null distribution of unbiased distance covariance satisfies
\begin{align} \label{eq1}
n \Dcov_{n}(\mathbf{X},\mathbf{Y}) \stackrel{D}{\rightarrow} \sum\limits_{i,j=1}^{\infty} \lambda_{i} \mu_{j} (\mc{N}_{ij}^{2}-1),
\end{align}
where $\{\lambda_{i}\}$ are the limiting eigenvalues of $\mathbf{H}\mathbf{D}^{X}\mathbf{H}/n$, $\{\mu_{j}\}$ are the limiting eigenvalues of $\mathbf{H}\mathbf{D}^{Y}\mathbf{H}/n$, $\mc{N}_{ij}$ are identically and independently distributed standard normal random variables, and the summation index sums over $i=1,\ldots,\infty$ and $j=1,\ldots,\infty$. The limiting null distribution using sample eigenvalues is shown to almost equal the finite-sample null distribution for $n \geq 20$ (see \citet{Lyons2013} and \citet{zhang2018} for more details), so it suffices to use the limiting null. The eigenvalues $\{\lambda_{i}\}$ and $\{\mu_{j}\}$ can vary significantly for different metric or kernel choices $d(\cdot,\cdot)$ and different marginal distributions $F_{X}$ and $F_{Y}$. Therefore, no fixed distribution can perfectly approximate the null all the time. 

In a brute-force manner, the sample eigenvalues can be estimated via eigen-decomposition of the sample matrices, then the null distribution can be simulated by generating $n^2$ independent normal distributions. This method has the best testing power (i.e., almost the same as permutation test for $n \geq 20$) but requires $\mc{O}(n^3)$ time complexity thus too costly. Alternatively, one may compute subsampled correlations and take average, which follows a normal distribution via the central limit theorem. However, it is well-known and provable that a subsampled statistic yields an inferior testing power, because the estimated null distribution is a very conservative one with enlarged variance. These two approaches are summarized in \citet{zhang2018}. 

The standard permutation test works as follows: for each replicate, permute the observations in $\mathbf{Y}$ (row indices of the matrix) by a random permutation $\pi$, denote the permuted sample data as $\mathbf{Y}^{\pi}$, and compute the permuted statistic $\Dcor(\mathbf{X},\mathbf{Y}^{\pi})$. Repeat for $r$ such random permutations, and compute a set of permuted statistics $\{\Dcor(\mathbf{X},\mathbf{Y}^{\pi})\}$. Then the p-value is the fraction of times the observed test statistic is more extreme than the permuted test statistics. This is summarized in Algorithm~\ref{alg2}. The random permutation effectively breaks dependencies within the sample data and well-approximates the actual null distribution. The permutation test is the default approach in almost every independence testing methodology, and provably a valid and consistent test with any consistent dependence measure \citep{mgc2}, not just distance correlation. Also note that distance correlation and distance covariance share the same p-value under permutation test, because the covariance to correlation transformation is invariant to permutation.

\begin{algorithm}
\caption{Permutation Test for Independence}
\label{alg2}
\begin{algorithmic}
\Require Paired sample data $(\mathbf{X},\mathbf{Y})=\{(x_i,y_i) \in \mathbb{R}^{p+q} \mbox{ for }  i \in [n]\}$, and the number of random permutation $r$.
\Ensure The test statistic $C$ and the p-value $p$.
\Function{PermutationTest}{$\mathbf{X},\mathbf{Y}, r$}
\State $C=\textsc{Stat}(\mathbf{X},\mathbf{Y})$; 
\Comment{can be any dependency measure not just distance correlation}

\For{$s=1,\ldots,r$}
\State $\pi=\texttt{randperm}(n)$;
\Comment{generate a random permute index}
\State  $cp(s)=\textsc{Stat}(\mathbf{X}(\pi),\mathbf{Y})$;
\Comment{$cp$ stores the permuted statistics}
\EndFor

\State $p=\sum_{s=1}^{r} \mb{I}(cp(s)>C)/r$
\Comment{the percentage the permuted statistics is larger}
\EndFunction
\end{algorithmic}
\end{algorithm}

A popular test using a known distribution is the distance correlation t-test \citep{SzekelyRizzo2013a}, which approximates the null distribution by a normal distribution of mean $0$ and variance $2$. When $X$ and $Y$ are independent, assume each dimension of $X$ and $Y$ are independently and independently distributed (or exchangeable) with positive finite variance, it was proved that
\begin{align}
\label{eq2}
\sqrt{n^2-3n-2} \cdot \Dcor_{n}(\mathbf{X},\mathbf{Y}) \stackrel{D}{\rightarrow} \mc{N} (0,2)
\end{align}
as $n,p,q \rightarrow \infty$. The t-distribution transformation and the corresponding t-test follow from the normal distribution. Therefore, the distance correlation t-test has a constant time complexity and enjoys a similar testing power as the permutation test under required condition. However, there has been no investigation on its testing performance out of the high-dimension assumption.

\section{Proofs}
\subsection{Proof of Theorem~\ref{thm7}}
\begin{proof}
From Theorem~\ref{cor1}, $U$ dominates $n\Dcor(\mathbf{X},\mathbf{Y})$ in upper tail which the actual null converges to. Therefore, there exists $n'$ and $\alpha^{'}$ such that the test correctly controls the type 1 error level for any $\alpha \leq \alpha'$ and sample size $n \geq n'$. For example, when $\alpha^{'} = 0.05$ from Theorem~\ref{thm4}, the test is expected to be valid at any type 1 error level no more than $0.05$ at sufficiently large $n$.

For consistency: at any $\alpha<2\Phi(1)-1$, $F_{U}^{-1}(1-\alpha)$ is a positive and fixed constant. When $X$ is dependent of $Y$, $\Dcor_{n}(\mathbf{X},\mathbf{Y})$ converges to a non-zero positive constant, such that $n\Dcor_{n}(\mathbf{X},\mathbf{Y}) \rightarrow +\infty > F_{U}^{-1}(1-\alpha)$ and the test is always correctly rejected asymptotically. 

Therefore, the distance correlation chi-square test is valid and universally consistent for testing independence.
\end{proof}

\subsection{Proof of Theorem~\ref{thm8}}
As the p-value computation in Algorithm~\ref{alg1} takes $\mc{O}(1)$, it suffices to show the bias-corrected distance correlation can be computed in $\mc{O}(n \log n)$ for one-dimensional data using Euclidean distance. Denote the distances and centered distances as
\begin{align*}
A_{ij}  =\|x_i-x_j\|_{2}, \ \ \ \  &B_{ij} =\|y_i-y_j\|_{2}\\
A_{i \cdot}  =\sum\limits_{j=1}^{n} A_{ij}, \ \ \ \ &B_{i \cdot} =\sum\limits_{j=1}^{n} B_{ij}\\
A_{\cdot \cdot}  =\sum\limits_{i=1}^{n} A_{i \cdot}, \ \ \ \  &B_{\cdot \cdot} =\sum\limits_{i=1}^{n} B_{i \cdot}.
\end{align*}
and define
\begin{align*}
T_1 =\sum\limits_{i,j=1}^{n} A_{ij}B_{ij}, \ \ T_2 =\sum\limits_{i=1}^{n}A_{i \cdot}B_{i\cdot}, \ \ T_3 &=A_{\cdot \cdot} B_{\cdot \cdot}.
\end{align*}
It was shown in \citet{Hu2018} that $T_1, T_2, T_3$ can be computed in $\mc{O}(n \log n)$ for one-dimensional data using Euclidean metric. Therefore, it suffices to prove the following lemma:

\begin{lemma}
The unbiased distance covariance can be expressed into
\begin{align*}
\Dcov_{n}(\mathbf{X},\mathbf{Y})=&\frac{T_1}{n(n-3)} - \frac{2T_2}{n(n-2)(n-3)}+\frac{T_3}{n(n-1)(n-2)(n-3)}.
\end{align*}
Then unbiased distance covariance and correlation can be computed in $\mc{O}(n \log n)$ for one-dimensional data using Euclidean distance.
\end{lemma}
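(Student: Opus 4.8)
The plan is to verify the closed form by directly expanding $\trace{\mathbf{C}^{\mathbf{X}}\mathbf{C}^{\mathbf{Y}}}$ and collecting everything into $T_1,T_2,T_3$; the $\mc{O}(n\log n)$ claim then drops out. First I would record that each $\mathcal{U}$-centered matrix is symmetric with vanishing diagonal, so that
\begin{align*}
n(n-3)\Dcov_{n}(\mathbf{X},\mathbf{Y}) = \trace{\mathbf{C}^{\mathbf{X}}\mathbf{C}^{\mathbf{Y}}} = \sum_{i \neq j} \mathbf{C}^{\mathbf{X}}_{ij}\mathbf{C}^{\mathbf{Y}}_{ij}.
\end{align*}
Writing $\alpha=\frac{1}{n-2}$ and $\gamma=\frac{1}{(n-1)(n-2)}$, and using the symmetry $\sum_{s}\mathbf{D}^{\mathbf{X}}_{sj}=A_{j\cdot}$, the off-diagonal entries read $\mathbf{C}^{\mathbf{X}}_{ij}=A_{ij}-\alpha A_{i\cdot}-\alpha A_{j\cdot}+\gamma A_{\cdot\cdot}$, with the analogous expression for $\mathbf{C}^{\mathbf{Y}}_{ij}$.

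Next I would expand the product $\mathbf{C}^{\mathbf{X}}_{ij}\mathbf{C}^{\mathbf{Y}}_{ij}$ into its sixteen terms and evaluate each under $\sum_{i\neq j}$ in two stages: first sum over all pairs $(i,j)$, which collapses every term into a multiple of $T_1$, $T_2$, or $T_3$ (for instance $\sum_{i,j}A_{ij}B_{i\cdot}=T_2$, $\sum_{i,j}A_{i\cdot}B_{j\cdot}=T_3$, and $\sum_{i,j}A_{i\cdot}B_{\cdot\cdot}=nT_3$), and then subtract the diagonal contribution at $i=j$. The key simplification is that any term carrying a factor $A_{ij}$ or $B_{ij}$ contributes nothing on the diagonal because $A_{ii}=B_{ii}=0$, so only the nine "pure" row/column/grand-sum products require a diagonal correction at all.

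The step I expect to be delicate is precisely this diagonal bookkeeping, since the corrections do not respect the $T_1/T_2/T_3$ split of the full sum. In particular, the cross terms $A_{i\cdot}B_{j\cdot}$ and $A_{j\cdot}B_{i\cdot}$ are multiples of $T_3$ in the full sum yet each shed the $T_2$-type diagonal $\alpha^2 T_2$, while the terms $A_{i\cdot}B_{\cdot\cdot}$ and $A_{j\cdot}B_{\cdot\cdot}$ each shed a $T_3$-type diagonal $\alpha\gamma T_3$ that is very easy to overlook (omitting it produces a wrong $T_3$ coefficient such as $\frac{n-4}{(n-1)(n-2)^2}$ instead of $\gamma$). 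After collecting and using the handy identity $\alpha=(n-1)\gamma$, the coefficient of $T_1$ is $1$; the coefficient of $T_2$ is $-4\alpha+2\alpha^2(n-2)=-2\alpha$; and the coefficient of $T_3$ is $2\gamma-(n-1)(n-2)\gamma^2=\gamma$. Dividing by $n(n-3)$ turns these into the three stated coefficients $\frac{1}{n(n-3)}$, $-\frac{2}{n(n-2)(n-3)}$, and $\frac{1}{n(n-1)(n-2)(n-3)}$.

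Finally, since \citet{Hu2018} provides an $\mc{O}(n\log n)$ routine for $T_1,T_2,T_3$ on one-dimensional Euclidean data, the closed form evaluates $\Dcov_{n}(\mathbf{X},\mathbf{Y})$ in $\mc{O}(n\log n)$ via $\mc{O}(1)$ additional arithmetic; applying the same representation to the pairs $(\mathbf{X},\mathbf{X})$ and $(\mathbf{Y},\mathbf{Y})$ then yields the normalizing denominators, so $\Dcor_{n}(\mathbf{X},\mathbf{Y})$ is computable in $\mc{O}(n\log n)$ as well.
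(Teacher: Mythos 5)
Your proposal is correct and follows essentially the same route as the paper: both directly expand the sixteen terms of $\sum_{i\neq j}\mathbf{C}^{\mathbf{X}}_{ij}\mathbf{C}^{\mathbf{Y}}_{ij}$, collapse them into multiples of $T_1,T_2,T_3$ while accounting for the $i=j$ diagonal (the paper folds the diagonal corrections into each row of the expansion, whereas you compute the full double sum and subtract the diagonal, but this is only a bookkeeping difference), and then invoke \citet{Hu2018} for the $\mc{O}(n\log n)$ computation of $T_1,T_2,T_3$ and of the variance terms for $(\mathbf{X},\mathbf{X})$ and $(\mathbf{Y},\mathbf{Y})$. Your stated coefficients $-4\alpha+2\alpha^2(n-2)=-2\alpha$ and $2\gamma-(n-1)(n-2)\gamma^2=\gamma$ check out exactly, so the argument is sound as written.
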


\begin{proof}
The unbiased distance covariance can be decomposed into
\begin{align*}
\Dcov_{n}(\mathbf{X},\mathbf{Y})=& \frac{1}{n(n-3)}\sum\limits_{i \neq j}^{n}(A_{ij} -\frac{1}{n-2}A_{i \cdot}-\frac{1}{n-2}A_{j \cdot}+\frac{1}{(n-1)(n-2)}A_{\cdot  \cdot})\\
&\cdot (B_{ij} -\frac{1}{n-2}B_{i \cdot}-\frac{1}{n-2}B_{j \cdot}+\frac{1}{(n-1)(n-2)}B_{\cdot  \cdot}) \\
=& \frac{T_1}{n(n-3)} - \frac{2T_2}{n(n-2)(n-3)}+\frac{T_3}{n(n-1)(n-2)(n-3)} \\
&-\frac{T_2}{n(n-2)(n-3)}+\frac{(n-1)T_2}{n(n-2)^2(n-3)}+\frac{T_3-T_2-T_3}{n(n-2)^2(n-3)}\\
&-\frac{T_2}{n(n-2)(n-3)}+\frac{(n-1)T_2}{n(n-2)^2(n-3)}+\frac{T_3-T_2-T_3}{n(n-2)^2(n-3)}\\
&+\frac{T_3}{n(n-1)(n-2)(n-3)}-\frac{2T_3}{n(n-2)^2(n-3)}+\frac{T_3}{(n-1)(n-2)^2(n-3)}\\
=& \frac{T_1}{n(n-3)} - \frac{2T_2}{n(n-2)(n-3)}+\frac{T_3}{n(n-1)(n-2)(n-3)}.
\end{align*}
To compute the bias-corrected distance correlation, one needs to compute $\Dcov_{n}(\mathbf{X},\mathbf{Y})$, $\Dcov_{n}(\mathbf{X},\mathbf{X})$, and $\Dcov_{n}(\mathbf{Y},\mathbf{Y})$, all of which now take $\mc{O}(n \log n)$. Therefore, the bias-corrected distance correlation can be computed in $\mc{O}(n \log n)$. 
\end{proof}

\subsection{Proof of Corollary~\ref{cor3} and Corollary~\ref{cor4}}
\begin{proof}
From \citet{exact1}, a valid and consistent independence test is also valid and consistent for K-sample testing. Thus Corollary~\ref{cor3} follows immediately from Theorem~\ref{thm7}.

From \citet{SzekelyRizzo2014}, the partial distance correlation is also trace of the product of two modified distance matrix. Therefore $\sct{PDcor}(X,Y;Z)$ has the same limiting null distribution as in Equation~\ref{eq1}. Thus Corollary~\ref{cor4} also follows from Theorem~\ref{thm7}.
\end{proof}

\subsection{Proof of Theorem~\ref{thm1}}
\begin{proof}
Recall from Equation~\ref{eq1} that the limiting null distribution of distance covariance satisfies
\begin{align*}
&n \Dcov_{n}(\mathbf{X},\mathbf{Y}) \stackrel{D}{\rightarrow}  \sum\limits_{i,j=1}^{\infty} \lambda_{i} \mu_{j} (\mc{N}_{ij}^{2}-1),
\end{align*}
where $\{\lambda_{i}\}$ are the limiting eigenvalues of $\mathbf{H}\mathbf{D}^{\mathbf{X}}\mathbf{H}/n$ and $\{\mu_{j}\}$ are the limiting eigenvalues of $\mathbf{H}\mathbf{D}^{\mathbf{Y}}\mathbf{H}/n$. Then the distance variance always satisfies
\begin{align*}
&\lim \limits_{n\rightarrow \infty}(\Dcov_{n}(\mathbf{X}, \mathbf{X}) - \sum\limits_{i=1}^{n} \lambda_{i}^2) \\
= &\lim \limits_{n\rightarrow \infty}(\Dcov_{n}(\mathbf{X}, \mathbf{X}) - \Dcov_{n}^{b}(\mathbf{X}, \mathbf{X})) \\
\rightarrow & \ 0,
\end{align*}
where the third line follows from the fact that both unbiased and biased statistics converge to a same constant, and the second line follows from 
\begin{align*}
\Dcov_{n}^{b}(\mathbf{X}, \mathbf{X}) = \frac{1}{n^2}\trace{\mathbf{H}\mathbf{D}^{\mathbf{X}}\mathbf{H}\mathbf{H}\mathbf{D}^{\mathbf{X}}\mathbf{H}} = \sum\limits_{i=1}^{n} \lambda_{i}^2.
\end{align*}
Therefore,
\begin{align*}
&\Dcov_{n}(\mathbf{X},\mathbf{X}) \rightarrow \sum\limits_{i=1}^{\infty} \lambda_{i}^2,\\
&\Dcov_{n}(\mathbf{Y},\mathbf{Y}) \rightarrow \sum\limits_{j=1}^{\infty} \mu_{j}^2 ,\\
& n\Dcor_{n}(\mathbf{X},\mathbf{Y}) \stackrel{D}{\rightarrow} \sum\limits_{i,j=1}^{\infty} w_{ij} (\mc{N}_{ij}^{2}-1).
\end{align*}
where $w_{ij}= \frac{\lambda_{i} \mu_{j}}{ \sqrt{\sum\limits_{i=1}^{\infty} \lambda_{i}^2 \sum\limits_{j=1}^{\infty} \mu_{j}^2}}$.

A strong negative type metric is always of negative type, and a characteristic kernel is always a positive definite kernel. When the distance metric is of negative type, the two matrices are negative definite and all eigenvalues are all non-positive. When positive definite kernels are used, then these eigenvalues are all non-negative. In either case, the product $\{\lambda_{i} \mu_{j}\}$ is always non-negative such that $w_{ij} \in [0,1]$. Moreover, for any $n \geq 1$ it always holds that
\begin{align*}
\sum\limits_{i,j=1}^{n} w_{ij}^2 &= \frac{\sum\limits_{i,j=1}^{n} \lambda_{i}^{2} \mu_{j}^{2}}{ \sum\limits_{i=1}^{n} \lambda_{i}^2 \sum\limits_{j=1}^{n} \mu_{j}^2} =1.
\end{align*}

Note that in the special case that either $X$ or $Y$ is constant, all eigenvalues are $0$ so the correlation equals $0$ instead. This corresponds to a trivial independence case, and all dominance / validity / consistency results hold trivially. 
\end{proof}

\subsection{Proof of Theorem~\ref{thm3}}
\begin{proof}
We prove this theorem by induction. Lemma~\ref{lem1} proves the initial step at $m=2$ by analyzing the smaller weight $w_{2} \in (0,\frac{1}{\sqrt{2}}]$, and Lemma~\ref{lem2} proves the induction step by considering the smallest weight $w_{m} \in (0,\frac{1}{\sqrt{m}}]$. As the weights are square summed to $1$, the smallest positive weight must satisfy $w_{m} \in (0,\frac{1}{\sqrt{m}}]$, and the largest weight must satisfy $w_{1} \in [\frac{1}{\sqrt{m}},1]$ . It suffices to consider all positive weights, because it reduces to the $m-1$ case when $w_{m}=0$. Moreover, if $m=1$, the summation density equals $U$ which satisfies $f_{U}(x)=O(e^{-x/2})$; and if $w_i=\frac{1}{\sqrt{m}}$ for all $i$, it corresponds to Theorem~\ref{thm4}. 
\end{proof}

\begin{lemma}
\label{lem1}
Suppose both $U$ and $V$ are independently and identically distributed as the centered chi-square distribution, and $w \in (0,1/\sqrt{2}]$. The density of $\sqrt{1-w^2} U+w V$ decays exponentially at the rate $O(e^{-x/(2\sqrt{1-w^2})})$. 
\end{lemma}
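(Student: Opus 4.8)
The plan is to reduce the statement to a convolution of two scaled chi-square densities and then track the slowest-decaying exponential in that convolution. Write $a = \sqrt{1-w^2}$ and $b = w$, so that $a^2 + b^2 = 1$ and, because $w \in (0, 1/\sqrt{2}]$, the weight $a$ is the larger one: $a \in [1/\sqrt{2}, 1)$ and $a \geq b$. Since $U+1$ and $V+1$ are independent $\chi^2_1$ variables, the shifted variable $T \defeq \sqrt{1-w^2}\,U + wV + (a+b)$ equals $a\chi^2_1 + b\chi^2_1$ with the two chi-squares independent. The density of $a\chi^2_1$ is the explicit half-line density $\frac{1}{\sqrt{2\pi a y}}\,e^{-y/(2a)}$ for $y>0$ (and likewise with $b$), so $f_T$ is their convolution.

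First I would write this convolution out and factor the exponential. Pulling $e^{-x/(2b)}$ out of the integrand and setting $\delta \defeq \frac{1}{2b} - \frac{1}{2a} \geq 0$ gives
\[
f_T(x) = \frac{e^{-x/(2b)}}{2\pi\sqrt{ab}} \int_0^x \frac{e^{\delta y}}{\sqrt{y(x-y)}}\, dy .
\]
Since $e^{-x/(2a)} = e^{-x/(2b)}\,e^{\delta x}$, the target bound $f_T(x) = O(e^{-x/(2a)})$ is equivalent to showing that the integral is $O(e^{\delta x})$.

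Then I would estimate that integral by the substitution $y = xu$, which turns it into $\int_0^1 \frac{e^{\delta x u}}{\sqrt{u(1-u)}}\,du$. When $\delta = 0$ (i.e.\ $w = 1/\sqrt{2}$ and $a=b$) this is the Beta integral, equal to $\pi$ for every $x$, which already yields the exact rate. When $\delta > 0$ the integrand concentrates at the endpoint $u=1$, where the exponential attains its maximum $e^{\delta x}$ while $1/\sqrt{u(1-u)}$ has only an integrable singularity; a Watson/Laplace estimate near $u=1$ (substituting $v = 1-u$ and comparing against $\int_0^\infty v^{-1/2} e^{-\delta x v}\,dv = \sqrt{\pi/(\delta x)}$) shows the integral is $O(e^{\delta x}/\sqrt{x})$, which is in particular $O(e^{\delta x})$. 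Combining the cases gives $f_T(x) = O(e^{-x/(2a)})$, and since the density of $S = \sqrt{1-w^2}\,U + wV$ satisfies $f_S(x) = f_T(x+a+b)$, the constant shift is absorbed into the implied constant and $f_S(x) = O(e^{-x/(2\sqrt{1-w^2})})$, as claimed.

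The main obstacle is precisely this endpoint estimate of the convolution integral: the largest exponential weight sits exactly at $u=1$, where the algebraic factor blows up, so one cannot simply bound the exponential by its maximum times the length of the interval. Controlling this competition between the peak of $e^{\delta x u}$ and the integrable singularity—cleanly, uniformly in $x$, and degenerating correctly to the exact Beta-integral value as $\delta \to 0$—is the delicate part; everything else is bookkeeping. An alternative, more automatic route is to observe that the tail decay rate equals the reciprocal of the rightmost singularity $t = 1/(2a)$ of the moment generating function $M_S(t) = e^{-t(a+b)}(1-2ta)^{-1/2}(1-2tb)^{-1/2}$, which is the $a$-factor singularity because $a \geq b$; but making that rigorous still reduces to essentially the same Laplace-type estimate.
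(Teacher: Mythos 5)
Your proof is correct, and it follows the same basic route as the paper's: express the density of the weighted sum as the convolution of two scaled chi-square-type densities and extract the exponential rate dictated by the larger weight $\sqrt{1-w^2}$. The execution differs in ways worth noting. The paper keeps the centered densities, factors out the \emph{target} exponential $e^{-x/(2\sqrt{1-w^2})}$, and bounds the leftover factor $g(x)$ by $O(x)$ via ``interval length times bounded integrand''; that step is loose as written, since the algebraic factor $\bigl((\tfrac{x-wz}{\sqrt{1-w^2}}+1)(z+1)\bigr)^{-1/2}$ is not pointwise $O(1)$ (it is singular at both endpoints of the integration range), and the argument only survives because those singularities are integrable. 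You instead pass to the uncentered sum $a\chi^2_1+b\chi^2_1$ (with $a=\sqrt{1-w^2}$, $b=w$), factor out the \emph{faster} exponential $e^{-x/(2b)}$, reduce to the Beta-type integral $\int_0^1 e^{\delta x u}(u(1-u))^{-1/2}\,du$, and control it with a Laplace-type endpoint estimate. That buys an exact treatment of the boundary case $w=1/\sqrt{2}$ (where the integral equals $\pi$) and the sharper conclusion $O(e^{-x/(2\sqrt{1-w^2})}/\sqrt{x})$, i.e., the literal statement of the lemma with no polynomial loss, versus the paper's implicit $O(x\,e^{-x/(2\sqrt{1-w^2})})$. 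One correction to your own commentary: the ``main obstacle'' you identify is not actually an obstacle. Because $\int_0^1 (u(1-u))^{-1/2}\,du=\pi$ is finite, the crude bound $e^{\delta xu}\le e^{\delta x}$ immediately gives $O(e^{\delta x})$, hence $O(e^{-x/(2a)})$, in one line --- the integrability of the singularity is precisely what makes the naive bound work (and is also what rescues the paper's loose step); your Watson/Laplace refinement is only needed for the extra factor of $1/\sqrt{x}$.
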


\begin{proof}
The summation density equals
\begin{align*}
 & f_{w U + \sqrt{1-w^2} V} (x) = \int_{-w}^{+\infty}f_{\sqrt{1-w^2} V}(x-z)f_{wU}(z)dz \\
 =& \frac{b^2}{w\sqrt{1-w^2}} \int_{-w}^{x+\sqrt{1-w^2}} ((\frac{x-z}{\sqrt{1-w^2}}+1)(\frac{z}{w}+1))^{-0.5}e^{-(\frac{x-z}{\sqrt{1-w^2}}+\frac{z}{w}+2)/2} dz \\
 =& e^{-\frac{x}{2\sqrt{1-w^2}}} \cdot \frac{b^2 e^{-0.5}}{\sqrt{1-w^2}} \cdot \int_{-1}^{\frac{x+\sqrt{1-w^2}}{w}} ((\frac{x-wz}{\sqrt{1-w^2}}+1)(z+1))^{-0.5}e^{-((1-\frac{w}{\sqrt{1-w^2}})z+1)/2} dz \\
 = & e^{-\frac{x}{2\sqrt{1-w^2}}} \cdot g(x),
 \end{align*}
 where the second to last equality involves a change of variable from $z$ to $\frac{z}{w}$, and the last equality combines every other term into $g(x)$.
 
 The leading exponential term dominates the decay rate of the density, while $g(x)$ is at most $O(x)$: the term $\frac{b^2 e^{-0.5}}{\sqrt{1-w^2}}\leq\sqrt{2}b<1$ is a fixed constant; the upper bound of the integral increases at $O(x)$; the polynomial term of the integral is $O(1)$; and the remaining exponential term in the integral satisfies
 \begin{align*}
 e^{-((1-\frac{w}{\sqrt{1-w^2}})z+1)/2} \leq 1
 \end{align*}
 for any fixed $w \in (0,\frac{1}{\sqrt{2}}]$. Therefore, the density of $wU+\sqrt{1-w^2}V$ decays at the rate $O(e^{-xc/2})$, for which $c=(1-w^2)^{-0.5} \in (1,\sqrt{2}]$. When we consider $m=2$ and let $w_2=w$ be the smaller weight, $w_1=\sqrt{1-w^2}$ becomes the larger weight so $c=1/w_{1}$.
\end{proof}

\begin{lemma}
\label{lem2}
Suppose $U$ is the centered chi-square distribution, and $V$ is an $m-1$ weighted summation of $U_i$ using the weights $\{w_{i} (1-w_{m}^2)^{-0.5}, i=1,\ldots,m-1\}$ for any $m>2$ and $w_{m} \in (0,\frac{1}{\sqrt{m}}]$. Assume the density of $V$ satisfies
\begin{align*}
 & f_{V}(x) = O(e^{-x c_{m-1}/2})
\end{align*}
where $c_{m-1}=\sqrt{1-w_{m}^2}/w_{1} \in [1, \sqrt{m-1}]$. Then the density $f_{w_{m}U+\sqrt{1-w_{m}^2} V}(x)$ satisfies   
\begin{align*}
 & f_{w_{m}U+\sqrt{1-w_{m}^2} V}(x) = O(e^{-x c_{m}/2})
\end{align*}
where $c_{m}=1/w_{1} \in [1, \sqrt{m}]$. 
\end{lemma}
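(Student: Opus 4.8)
The plan is to reproduce the convolution analysis of Lemma~\ref{lem1} in the induction step, replacing the explicit centered chi-square density of the larger summand by the asymptotic bound supplied by the induction hypothesis on $V$. First I would write the target density as the convolution
\[
f_{w_m U + \sqrt{1-w_m^2}V}(x) = \int f_{\sqrt{1-w_m^2}V}(x-z)\, f_{w_m U}(z)\, dz,
\]
where $f_{w_m U}(z) = \frac{b}{w_m}(z/w_m+1)^{-1/2} e^{-(z/w_m+1)/2}$ for $z > -w_m$ is the rescaled centered chi-square density. The cleanest observation to record is that the constant $c_{m-1}$ was chosen precisely so that rescaling $V$ by $\sqrt{1-w_m^2}$ produces exactly the target rate: by the induction hypothesis and a change of scale,
\[
f_{\sqrt{1-w_m^2}V}(v) = \frac{1}{\sqrt{1-w_m^2}}\, f_V\!\left(\frac{v}{\sqrt{1-w_m^2}}\right) = O\!\left(e^{-\frac{v}{\sqrt{1-w_m^2}}\cdot \frac{c_{m-1}}{2}}\right) = O\!\left(e^{-v/(2w_1)}\right),
\]
so the rescaled summand already decays at the desired rate $c_m = 1/w_1$. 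The substance of the lemma is thus that convolving with $w_m U$ does not degrade this rate.

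Next I would insert this exponential bound for $f_{\sqrt{1-w_m^2}V}(x-z)$ into the convolution, factor out $e^{-x/(2w_1)}$, and apply the same change of variable $u = z/w_m$ used in Lemma~\ref{lem1}. The products of leading exponentials combine as $e^{-x/(2w_1)}\,e^{z(1/w_1 - 1/w_m)/2}$, which after the substitution leaves
\[
f_{w_m U + \sqrt{1-w_m^2}V}(x) = O\!\left(e^{-x/(2w_1)}\right)\cdot \int_{-1}^{\infty} (u+1)^{-1/2}\, e^{(w_m/w_1 - 1)u/2}\, du,
\]
where I have extended the (linearly growing in $x$) upper limit to $+\infty$, a legitimate bound since the integrand is nonnegative. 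It then remains only to show that this residual integral is a finite constant independent of $x$, which immediately yields the claimed $O(e^{-x c_m/2})$.

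The main obstacle, and the crux of the induction step, is controlling this residual integral, and it hinges on the strict inequality $w_m < w_1$. Its integrand carries the factor $e^{(w_m/w_1 - 1)u/2}$, which decays in $u$ exactly when $w_m < w_1$; combined with the integrable singularity $(u+1)^{-1/2}$ at $u=-1$, this makes the integral converge to a finite constant, giving the clean exponential bound with no polynomial correction. I would establish the strict inequality from the hypotheses as follows: since $w_m$ is the smallest weight and $\sum_{i=1}^{m} w_i^2 = 1$, any configuration other than the all-equal one forces $w_m < 1/\sqrt{m} < w_1$. The degenerate equal-weight case $w_m = w_1 = 1/\sqrt{m}$ is precisely where the residual integral ceases to converge (it acquires a polynomial factor), but that case is routed to Theorem~\ref{thm4} and so need not be treated by the induction. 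A secondary technical point is that the induction hypothesis provides only a tail $O$-bound on $f_V$; I would dispatch the complementary bulk region $z \approx x$ separately, noting that there $f_{w_m U}(z) = O(e^{-x/(2w_m)})$ decays at least as fast as the target since $w_m \le w_1$, so its contribution is absorbed into $O(e^{-x c_m/2})$.
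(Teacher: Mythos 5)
Your proposal is correct and takes essentially the same route as the paper's own proof: the same convolution decomposition of $f_{w_m U+\sqrt{1-w_m^2}V}$, the same change of variable $u=z/w_m$, and the same factoring out of the leading exponential $e^{-xc_m/2}$ --- indeed the paper's key quantity $\frac{w_m c_{m-1}}{\sqrt{1-w_m^2}}$ is exactly your $w_m/w_1$. Your execution is, if anything, slightly tighter: where the paper bounds the residual exponential in the integrand by $1$ and absorbs the $O(x)$-growing integration limit into its big-$O$, you invoke the strict inequality $w_m<w_1$ (routing the equal-weight case to Theorem~\ref{thm4}, just as the paper does in its proof of Theorem~\ref{thm3}) to make the residual integral a finite constant, and you explicitly dispatch the bulk region $z\approx x$, where the induction hypothesis's tail bound on $f_V$ does not apply, a point the paper passes over silently.
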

\begin{proof}
The initial case corresponds to Lemma~\ref{lem1} with $c_{1}=1/w_{1} \in (1,\sqrt{2}]$. Moreover, $\{w_{i} (1-w_{m}^2)^{-0.5}, i=1,\ldots,m-1\}$ is always a valid weighting scheme for $m-1$ summation because 
\begin{align*}
\sum_{i=1}^{m-1}w_{i}^{2}/(1-w_{m}^2)=1.
\end{align*} 
The density of $V$ must be of the form
\begin{align*}
f_{V}(x) = e^{-x c_{m-1}/2} g(x),
\end{align*}
where $g(x)$ is a function that grows at most $o(e^{x c_{m-1}/2})$. 

In the following, we let $w=w_{m}$ to simplify the expression. Then the summation density is
\begin{align*}
 & f_{w U + \sqrt{1-w^2} V} (x) \\ 
 =& \int_{-w}^{+\infty}f_{\sqrt{1-w^2} V}(x-z)f_{wU}(z)dz \\
 =& \frac{b}{\sqrt{1-w^2}} \int_{-1}^{\frac{x+\sqrt{m(1-w^2)}}{w}} (z+1)^{-0.5}e^{-(\frac{c_{m-1}(x-wz)}{\sqrt{1-w^2}}+z+1)/2} g(\frac{x-wz}{\sqrt{1-w^2}}) dz \\
 =& e^{-\frac{xc_{m-1}}{2\sqrt{1-w^2}}}\cdot \frac{b}{\sqrt{1-w^2}} \cdot  \int_{-1}^{\frac{x+\sqrt{m(1-w^2)}}{w}} (z+1)^{-0.5}e^{-((1-\frac{wc_{m-1}}{\sqrt{1-w^2}})z+1)/2} g(\frac{x-wz}{\sqrt{1-w^2}}) dz.
 \end{align*}
 The only exponential term within the integral satisfies 
 \begin{align*}
 e^{-((1-\frac{wc_{m-1}}{\sqrt{1-w^2}})z+1)/2} \leq 1.
 \end{align*}
 This is because
 \begin{align*}
 \frac{wc_{m-1}}{\sqrt{1-w^2}} \leq \frac{c_{m-1}}{\sqrt{m-1}} \leq \frac{\sqrt{m}}{m-1} <1
 \end{align*}
 for any $m > 2$, so that $(1-\frac{wc_{m-1}}{\sqrt{1-w^2}})z+1 \geq 0$ when $z > -1$. Analyzing every other term in the same manner as the base case in the proof of Lemma~\ref{lem1}, we conclude that the density is dominated by the leading exponential term. Therefore, the density decays at $O(e^{-xc_{m}/2})$, where
 \begin{align*}
 c_{m} &=\frac{c_{m-1}}{\sqrt{1-w^2}} = 1/w_{1} \in (1,\sqrt{m}].
 \end{align*}
\end{proof}

\subsection{Proof of Theorem~\ref{cor1}}
\begin{proof}
From Theorem~\ref{thm3}, the decay rate of the summation density is $O(e^{-x c/2})$ with $c\geq 1$. It always decays faster than $U$ such that for sufficiently large $x$,
\begin{align*}
f_{\sum\limits_{i=1}^{m}w_i U_i}(x) \leq f_{U}(x)
\end{align*}
with equality if and only if $m=1$, leading to upper tail dominance for sufficiently small $\alpha$. Moreover, $\mc{N}(0,2)$ has the same mean and variance as them, with a density decay rate of $O(e^{-x^2/8})$ that is always faster than $O(e^{-xc/2})$. Therefore, there exists $\alpha>0$ such that $\mc{N}(0,2) \preceq_{\alpha} n\Dcor(\mathbf{X},\mathbf{Y}) \preceq_{\alpha} U$.
\end{proof}

\subsection{Proof of Theorem~\ref{thm4}}
\begin{proof}
First, the density of $U$ is
\begin{align*}
& f_{U} (x) = b (x+1)^{-0.5}e^{-(x+1)/2}
\end{align*}
where $b=2^{-0.5} \Gamma(0.5)^{-1} \approx 0.4$ is the constant from standard chi-square distribution of degree $1$. The domain of $U$ is $(-1,+\infty)$, and the density equals $0$ otherwise. 

When $w_i=\frac{1}{\sqrt{m}}$, 
\begin{align*}
W=\sum\limits_{i=1}^{m} w_i U_i = \sum\limits_{i=1}^{m} U_i / \sqrt{m} \sim \frac{\chi^{2}_{m}-m}{\sqrt{m}},
\end{align*}
whose density equals 
\begin{align*}
\frac{\sqrt{m}}{2^{\frac{m}{2}}\Gamma(m/2)}(\sqrt{m} x+m)^{\frac{m}{2}-1} e^{-\frac{\sqrt{m}x+m}{2}}.
\end{align*}
At fixed $m$, the density of $W$ decays exponentially at the rate $O(e^{-\sqrt{m}x/2})$. This matches Theorem~\ref{thm3}, and there must exist $x$ such that $f_{U}(x') \geq f_{W}(x')$ for all $x' \geq x$.

As the distribution is known, we can exactly compute the argument $x$ such that $F_{W}(x) \geq F_{U}(x)$, which is monotonically decreasing as $m$ increases. In particular, $x=2.7 < F_{U}^{-1}(0.95)$ when $m=2$; $x=2.5$ when $m=3$; $x=2.3$ when $m=10$; and $x=2$ when $m=1000$. Therefore $\alpha$ is at least $0.05$ regardless of $m$, and $W \preceq_{0.05} U$ always holds in the equal weight case. As $m \rightarrow \infty$, the validity level converges to $\alpha = 0.0875$ from the proof of Theorem~\ref{thm6}.
\end{proof}

\subsection{Proof of Theorem~\ref{thm5}}
\begin{proof}
This theorem follows from Lemma~\ref{lem3}: when $X$ and $Y$ are binary random variables, the centered sample matrix $\mathbf{H}\mathbf{D}^{\mathbf{X}}\mathbf{H}$ only has $1$ nonzero eigenvalue, so is $\mathbf{H}\mathbf{D}^{\mathbf{Y}}\mathbf{H}$. As a result, the eigenvalue products in Equation~\ref{eq1} satisfy $\lambda_1 \mu_1 >0$ and $\lambda_i \mu_j =0$ otherwise.

Once the eigenvalue products are normalized into $w_{ij}$ (see proof of Theorem~\ref{thm1}), it follows that $w_{11}=1$ and $w_{ij}=0$ otherwise, and $n \Dcor_{n}(\mathbf{X},\mathbf{Y}) \stackrel{D}{\rightarrow} U$.
\begin{lemma}
\label{lem3}
Suppose the sample data $\mathbf{X}$ has at most $m$ distinct values. Then the sample matrix $\mathbf{H}\mathbf{D}^{\mathbf{X}}\mathbf{H}$ has at most $m-1$ non-zero eigenvalues regardless of $n$.
\end{lemma}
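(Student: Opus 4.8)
The plan is to exploit the fact that having only $m$ distinct values forces the distance matrix $\mathbf{D}^{\mathbf{X}}$ to factor through a small $m \times m$ matrix, and then to show that the double centering by $\mathbf{H}$ removes one further degree of freedom.

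First I would group the indices according to their value. Let $v_1,\ldots,v_m$ be the (at most) $m$ distinct values taken by $x_1,\ldots,x_n$, let $g(i) \in [m]$ be the index with $x_i = v_{g(i)}$, and let $\mathbf{M} \in \{0,1\}^{n \times m}$ be the one-hot membership matrix whose $(i,k)$ entry is $1$ exactly when $k = g(i)$. Writing $\mathbf{D}^{(m)}$ for the $m \times m$ distance matrix among the distinct values, a direct entrywise check gives the factorization $\mathbf{D}^{\mathbf{X}} = \mathbf{M}\mathbf{D}^{(m)}\mathbf{M}^{\mathsf{T}}$, since both sides have $(i,j)$ entry equal to $d(v_{g(i)},v_{g(j)}) = d(x_i,x_j)$. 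Consequently $\mathbf{H}\mathbf{D}^{\mathbf{X}}\mathbf{H} = (\mathbf{H}\mathbf{M})\mathbf{D}^{(m)}(\mathbf{H}\mathbf{M})^{\mathsf{T}}$, using $\mathbf{H}^{\mathsf{T}} = \mathbf{H}$.

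The key step is to bound the rank of $\mathbf{H}\mathbf{M}$. Because every row of $\mathbf{M}$ contains exactly one nonzero entry, the $m$ columns of $\mathbf{M}$ sum to the all-ones vector, i.e.\ $\mathbf{M}\mathbf{1}_m = \mathbf{1}_n$. Since $\mathbf{H}\mathbf{1}_n = 0$, this yields $\mathbf{H}\mathbf{M}\mathbf{1}_m = 0$, so $\mathbf{1}_m$ lies in the null space of the $n \times m$ matrix $\mathbf{H}\mathbf{M}$, giving $\operatorname{rank}(\mathbf{H}\mathbf{M}) \leq m-1$. As the rank of a product is at most the rank of any factor, and $\mathbf{H}\mathbf{M}$ appears as both outer factors, we obtain $\operatorname{rank}(\mathbf{H}\mathbf{D}^{\mathbf{X}}\mathbf{H}) \leq \operatorname{rank}(\mathbf{H}\mathbf{M}) \leq m-1$, which is exactly the claimed bound on the number of nonzero eigenvalues and holds for every $n$.

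I do not expect a serious obstacle here, since the whole argument is a rank computation. The only point requiring a little care is isolating where the improvement from $m$ to $m-1$ comes from: it is precisely the interaction between the constant row sums of the one-hot matrix $\mathbf{M}$ and the annihilation of the all-ones vector by $\mathbf{H}$, rather than any property of the metric. Indeed, the argument uses nothing about $d$ beyond its being a function of the pair of values, so the conclusion is robust to the metric or kernel choice and feeds directly into the proof of Theorem~\ref{thm5} once specialized to binary $X$ and $Y$ (where $m=2$).
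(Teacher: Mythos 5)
Your proof is correct, and it takes a genuinely different route from the paper's. The paper argues through the non-symmetric product $\mathbf{D}^{\mathbf{X}}\mathbf{H}$: it notes that $\mathbf{H}\mathbf{D}^{\mathbf{X}}\mathbf{H}$ and $\mathbf{D}^{\mathbf{X}}\mathbf{H}$ share eigenvalues (via idempotence of $\mathbf{H}$), then counts nonzero eigenvalues as $\operatorname{rank}(\mathbf{D}^{\mathbf{X}}\mathbf{H})=\operatorname{rank}(\mathbf{D}^{\mathbf{X}})-1$, with $\operatorname{rank}(\mathbf{D}^{\mathbf{X}})=m$ because the distance matrix has only $m$ distinct rows. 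Your factorization $\mathbf{D}^{\mathbf{X}}=\mathbf{M}\mathbf{D}^{(m)}\mathbf{M}^{\mathsf{T}}$ through the one-hot membership matrix avoids two points where the paper's argument is actually looser than it looks: the equality $\operatorname{rank}(\mathbf{D}^{\mathbf{X}}\mathbf{H})=\operatorname{rank}(\mathbf{D}^{\mathbf{X}})-1$ needs a justification (multiplication by $\mathbf{H}$ drops the rank by one only when $\ker(\mathbf{D}^{\mathbf{X}})$ is orthogonal to the all-ones vector), and $\operatorname{rank}(\mathbf{D}^{\mathbf{X}})=m$ needs the $m\times m$ distance matrix among distinct values to be nonsingular; moreover, for a non-symmetric matrix like $\mathbf{D}^{\mathbf{X}}\mathbf{H}$ the number of nonzero eigenvalues can in general be strictly less than the rank, so the paper's equalities should really be one-sided bounds. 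Your argument sidesteps all of this: every step is an inequality pointing in the needed direction, it uses nothing about the metric beyond $d$ being a symmetric function of the pair of values, and it isolates the source of the $-1$ cleanly in $\mathbf{H}\mathbf{M}\mathbf{1}_m=0$. The only cosmetic addition I would make is to say explicitly that $\mathbf{H}\mathbf{D}^{\mathbf{X}}\mathbf{H}$ is symmetric (or that, for any square matrix, the count of nonzero eigenvalues with multiplicity is at most the rank), so that the final passage from a rank bound to an eigenvalue count is stated rather than implicit.
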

This lemma can be argued via the sample matrix as follows:
First, as $\mathbf{H}$ is the centering matrix, the eigenvalues of $\mathbf{H}\mathbf{D}^{\mathbf{X}}\mathbf{H}$ always equal the eigenvalues of $\mathbf{D}^{\mathbf{X}}\mathbf{H}$. Next, observe that $det(\mathbf{H})=0$ and thus $det(\mathbf{D}^{\mathbf{X}}\mathbf{H})=0$, so there exists at least one zero eigenvalue and at most $n-1$ non-zero eigenvalues. The geometric multiplicity of the zero eigenvalue equals $n-rank(\mathbf{D}^{\mathbf{X}}\mathbf{H})$, so the total number of non-zero eigenvalues is $rank(\mathbf{D}^{\mathbf{X}}\mathbf{H})=rank(\mathbf{D}^{\mathbf{X}})-1$. When $X$ has $m$ distinct values, the distance matrix only has $m$ distinct rows and $rank(\mathbf{D}^{\mathbf{X}})=m$. Therefore the centered matrix has $m-1$ non-zero eigenvalues.
\end{proof}

\subsection{Proof of Theorem~\ref{thm6}}
\begin{proof}
As $p \rightarrow \infty$ and $X$ is continuous, the Euclidean distance matrix $\mathbf{D}^{\mathbf{X}}$ converges to $a(J-I)$, so the centered matrix converges to $a(-I)+a J/n$, where $a$ is a constant depending on the metric choice and marginal distribution $F_{X}$. Using Lemma~\ref{lem3}, $\mathbf{H}\mathbf{D}^{\mathbf{X}}\mathbf{H}/n$ has $1$ zero eigenvalue and $n-1$ non-zero eigenvalues that are asymptotically the same. Similarly for $\mathbf{H}\mathbf{D}^{\mathbf{Y}}\mathbf{H}/n$ when $q \rightarrow \infty$ and $Y$ is continuous.

This is essentially the asymptotic case of Theorem~\ref{thm4} as $m \rightarrow \infty$. By Lyapunov central limit theorem:
\begin{align*}
\sum\limits_{i=1}^{m} w_i U_i \stackrel{m\rightarrow \infty}{\rightarrow} \mc{N}(0,2).
\end{align*}
Evaluating the cumulative distribution of standard normal, it follows that $\mc{N}(0,2) \preceq_{\alpha} U$ at $\alpha = 0.0875$.
Therefore, when either $p$ or $q$ increases to infinity and $n$ also increases to infinity, the limiting null distribution becomes a normal distribution and satisfies $\mc{N}(0,2) \preceq_{0.0875} U$. 
\end{proof}

\subsection{Proof of Corollary~\ref{cor2}}
\begin{proof}
Given a test $z$ with a fixed distribution $Z$, it being always valid for testing independence at level $\alpha$ requires
\begin{align*}
U \preceq_{\alpha} Z
\end{align*}
because $U$ is the limiting null distribution of distance correlation between binary random variables by Theorem~\ref{thm5}. Thus, 
\begin{align*}
n \Dcor_{n}(\mathbf{X},\mathbf{Y}) \preceq_{\alpha} U \preceq_{\alpha} Z
\end{align*}
when testing independence between arbitrary random variables, and 
\begin{align*}
\beta_{\alpha}^{z} < \beta_{\alpha}^{\chi} \leq \beta_{\alpha}
\end{align*}
always holds. Therefore, the chi-square test is the most powerful test among all valid tests of distance correlation using known distributions.
\end{proof}

\section{Simulation Details}
\label{app_sim}
The simulations in Figure~\ref{fig1} and Figure~\ref{fig4} are generated via the exponential relationship:
\begin{align*}
\mbx[d] &\sim Uniform(-3,1) \mbox{ for each } d=1,\ldots,p\\
\mby &=\exp(\mbx) + 0.2*\epsilon,
\end{align*}
where $q=p$ and $\epsilon$ is an independent standard Cauchy random variable. 

The 1-dimensional sample data in Figure~\ref{fig2} are generated via the following:
\begin{itemize}
\item Linear $(\mbx,\mby)$:
\begin{align*}
\mbx &\sim Uniform(-1,1),\\
\mby &=\mbx + \epsilon.
\end{align*}
\item Quadratic $(\mbx,\mby)$:
\begin{align*}
\mbx &\sim Uniform(-1,1),\\
\mby&=\mbx^2 + 0.5\epsilon.
\end{align*}
\item Spiral $(\mbx,\mby)$: let $Z \sim \mc{N}(0,5)$, $\epsilon \sim \mc{N}(0, 1)$,
\begin{align*}
\mbx&=Z \cos(\pi Z),\\
\mby&= Z \sin(\pi Z) +0.4 \epsilon.
\end{align*}
\item Independent $(\mbx,\mby)$: let $Z \sim \mc{N}(0,1)$, $W \sim \mc{N}(0,1)$, $Z' \sim Bernoulli(0.5)$, $W' \sim Bernoulli(0.5)$, 
\begin{align*}
\mbx&=Z/3+2Z'-1,\\
\mby&=W/3+2W'-1.
\end{align*}
\end{itemize}
In all four simulations we have $p=q=1$, and $\epsilon$ is an independent standard normal random variable.

The increasing-dimensional simulations in Figure~\ref{fig3} are generated via 
\begin{itemize}
\item Equal variance:
\begin{align*}
\mbx[d] &\sim Uniform(-1,1) \mbox{ for each } d=1,\ldots,p\\
\mby &=\mbx[1];
\end{align*}
\item Minimal variance:
\begin{align*}
\mbx[d] &\sim Uniform(-1,1), \mbox{ for } d=1,\ldots,20,\\
\mbx[d] &\sim \frac{1}{p} \cdot Uniform(-1,1), \mbox{ for } d=21,\ldots,p,\\
\mby &=\mbx[1];
\end{align*}
\item Dependent coordinate:
\begin{align*}
\mbx[1] &\sim Uniform(-1,1),\\
\mbx[d] &= 0.5 \mbx[d-1]+Uniform(-0.5,0.5), \mbox{ for } d=2,\ldots,p,\\
\mby &=\sum\limits_{d=1}^{p} \mbx^2[d];
\end{align*}
\item Varying marginal:
\begin{align*}
\mbx[d] &\sim \mc{N}(d,d), \mbox{ for } d=1,\ldots,p,\\
\mby &=\mbx[1].
\end{align*}
\end{itemize}
In all four simulations we have $q=1$, sample size $n$ set to $20$, $100$, $50$, $100$ respectively, and $p$ increases from $100$ to $1000$ in the first two simulations and $2$ to $20$ in the latter two simulations.
\end{document}